\documentclass{article}

\usepackage{microtype}
\usepackage{graphicx}
\usepackage{multirow}
\usepackage{subfigure}
\usepackage{subcaption}
\usepackage{wrapfig}
\usepackage{booktabs} % for professional tables
\usepackage{amsmath}
\usepackage{amssymb}
\usepackage{mathtools}
\usepackage{amsthm}
\usepackage{colortbl}

\usepackage{hyperref}

\usepackage[accepted]{icml2026}

\usepackage[capitalize,noabbrev]{cleveref}

\theoremstyle{plain}
\newtheorem{theorem}{Theorem}[section]

\newtheorem{lemma}[theorem]{Lemma}

\theoremstyle{definition}
\newtheorem{definition}[theorem]{Definition}

\theoremstyle{remark}

\usepackage[textsize=tiny]{todonotes}

\begin{document}

\twocolumn[
  \icmltitle{Adaptive Recurrent Message Passing for Test Time Computing on Graphs}

  % It is OKAY to include author information, even for blind submissions: the
  % style file will automatically remove it for you unless you've provided
  % the [accepted] option to the icml2026 package.

  % List of affiliations: The first argument should be a (short) identifier you
  % will use later to specify author affiliations Academic affiliations
  % should list Department, University, City, Region, Country Industry
  % affiliations should list Company, City, Region, Country

  % You can specify symbols, otherwise they are numbered in order. Ideally, you
  % should not use this facility. Affiliations will be numbered in order of
  % appearance and this is the preferred way.
  \icmlsetsymbol{equal}{*}

  \begin{icmlauthorlist}
    \icmlauthor{Junshu Sun}{ict,ucas}
    \icmlauthor{Wanxing Chang}{ali}
    \icmlauthor{Qingming Huang}{ucas}
    \icmlauthor{Shuhui Wang}{ict}
  \end{icmlauthorlist}

  \icmlaffiliation{ict}{State Key Laboratory of AI Safety, Institute of Computing Technology, Chinese Academy of Sciences, Beijing, China}
  \icmlaffiliation{ucas}{University of Chinese Academy of Sciences, Beijing, China}
  \icmlaffiliation{ali}{DAMO Academy, Alibaba Group, Hangzhou, China}

  \icmlcorrespondingauthor{Shuhui Wang}{wangshuhui@ict.ac.cn}

  % You may provide any keywords that you find helpful for describing your
  % paper; these are used to populate the "keywords" metadata in the PDF but
  % will not be shown in the document
  \icmlkeywords{Machine Learning, ICML}

  \vskip 0.3in
]

% this must go after the closing bracket ] following \twocolumn[ ...

% This command actually creates the footnote in the first column listing the
% affiliations and the copyright notice. The command takes one argument, which
% is text to display at the start of the footnote. The \icmlEqualContribution
% command is standard text for equal contribution. Remove it (just {}) if you
% do not need this facility.

% Use ONE of the following lines. DO NOT remove the command.
% If you have no special notice, KEEP empty braces:
\printAffiliationsAndNotice{}  % no special notice (required even if empty)
% Or, if applicable, use the standard equal contribution text:
% \printAffiliationsAndNotice{\icmlEqualContribution}

\begin{abstract}
Pre-trained foundation models have demonstrated remarkable success in many domains, enabling a unified backbone to generalize across diverse downstream tasks. However, extending this paradigm to graph learning remains challenging due to the intrinsic mismatch between graph data and fixed architectural designs. In this work, we show that this limitation can be overcome via recurrent graph models. To achieve this, we conduct a systematic theoretical analysis, rigorously deriving step dependence as a necessary and sufficient condition for an adaptively convergent recurrent process. Building on this foundation, we propose AdaR, an \textbf{Ada}ptive \textbf{R}ecurrent graph model, empowering flexible test-time computing on various downstream tasks without changing model parameters. To enable adaptive inference, AdaR explicitly encodes normalized step information and representation–target relations into the recurrent updates. To ensure convergence of the recurrent process, AdaR employs gradient-based supervision signals that guide representation updates throughout the recurrence. Empirical results demonstrate that AdaR consistently outperforms strong baselines in both inductive and transductive settings.
\end{abstract}

\section{Introduction}
% \begin{figure}[htb]
\begin{figure}[htb]
\centering
\includegraphics[width=0.95\linewidth]{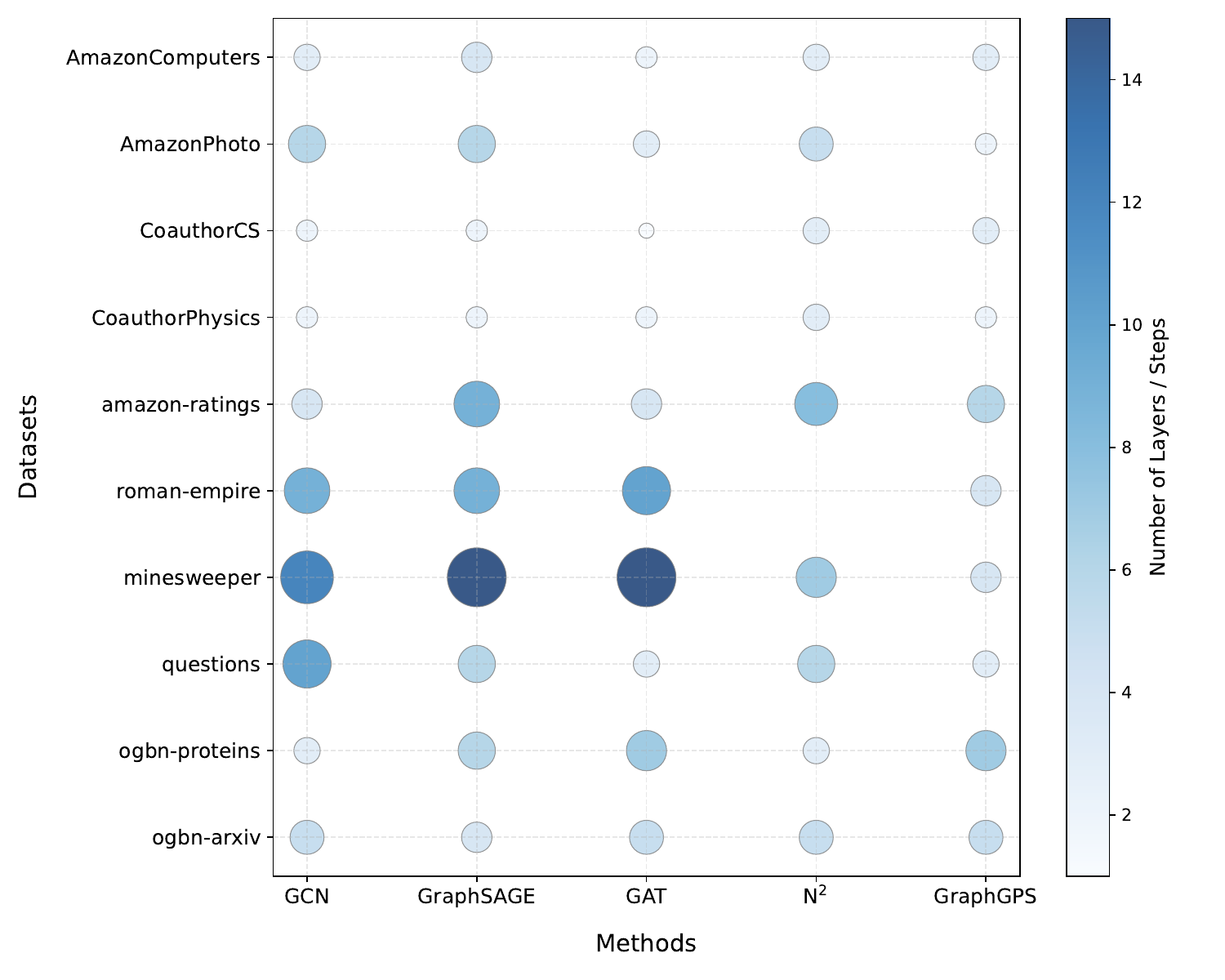}
% \vspace{-0.1in}
\caption{\textbf{Diversity of Optimal Layer Configurations Across Datasets.} Graph models exhibit varying optimal numbers of layers or steps across different datasets. This indicates that effective feature extraction requires the models to adjust their receptive field based on specific datasets. In consequence, pre-trained graph models with a fixed number of layers lack the flexibility to be adapted to diverse datasets.}\label{fig:teaser}
\vspace{-0.1in}
\end{figure}
Pre-trained foundation models have achieved remarkable success in many domains, such as natural language understanding~\cite{openai_GPT4TechnicalReport_2024} and image processing~\cite{kirillov_SegmentAnything_2023}. This success highlights the potential of a unified paradigm, where a fixed model architecture with a single set of pre-trained parameters can be used for diverse downstream tasks. 
However, this paradigm has yet to be validated in graph learning, primarily due to the mismatch between graph data characteristics and pre-defined model architectures.

Specifically, graph data exhibit diverse receptive field requirements, whereas pre-defined backbone architectures impose a fixed receptive field~\footnote{Reachable nodes via message passing on the input graph.}. From the data perspective, real-world graphs vary significantly in scale, requiring models to adapt their receptive fields to extract both local neighborhood information and global graph patterns (Fig.~\ref{fig:teaser}\footnote{Results are averaged on configurations with the same number of layers to exclude variations caused by other hyperparameters.}). From the architectural perspective, graph neural networks (GNNs) expand their receptive fields via layer stacking, where multi-hop feature extraction is tightly coupled with the number of layers~\cite{kipf_SemiSupervisedClassificationGraph_2017,sun_AllinARow_2023}. As a result, pre-trained GNNs with fixed architectures can not adjust their receptive field without altering the model parameters. This mismatch between the adaptive receptive field requirements of graph data and the rigid design of pre-defined architectures poses a fundamental obstacle to realizing one-for-all pre-training on graphs.

To address this problem, previous methods introduce additional adjustments to adapt pure graph models to downstream tasks, either at the data level through graph prompting~\cite{sun_AllOneMultiTask_2023,sun_GPPTGraphPretraining_2022,fang_UniversalPromptTuning_2023} or at the model level via fine-tuning~\cite{gui_GAdapterStructureawareParameterefficient_2024,li_AdapterGNNParameterEfficientFineTuning_2024}. Other methods~\cite{chen_LLaGALargeLanguage_2024,gao_GraphWorth$K$_2024} further incorporate external knowledge by leveraging language models (LMs) to enhance task adaptability. Nevertheless, it remains an open problem whether a pre-defined graph backbone can be directly adapted to downstream tasks without any additional architectural or data modifications.

In this paper, we answer this question affirmatively through recurrent graph models, which can adjust their receptive fields by computing with different numbers of recurrent steps during test time. We present a systematic study of recurrent graph models, establishing step dependence as a necessary and sufficient condition for adaptive inference under different recurrent step budgets. Based on this theoretical foundation, we propose an \textbf{Ada}ptive \textbf{R}ecurrent graph model, termed \textbf{AdaR}, enabling flexible test-time computing on graphs with varying receptive field requirements without altering the model architecture.

To enable an adaptive recurrent process, AdaR explicitly depends on the current \textit{step information}, the current graph node representation, and the \textit{representation-target relations}. Among these, the \textit{step dependence} ensures convergent approximation towards the targets based on our theoretical analysis. By explicitly encoding normalized step information into the recurrent updates, the model is aware of its recurrent progress and avoids out-of-distribution behavior on unseen budgets for the number of recurrent steps. The \textit{relation dependence} enables adaptive inference based on specific inputs and targets. AdaR dynamically computes relations between the current representations and their targets, adjusting the representation updates accordingly. During the training stage, AdaR employs gradient-based supervision signals that guide representation updates throughout the recurrence. Empirically, we demonstrate that AdaR can be trained with a small iteration budget while being evaluated with substantially larger budgets, consistently outperforming baselines on both inductive and transductive tasks. Our contributions can be summarized as follows
\begin{itemize}
    \item We present a theoretical analysis of recurrent graph models, deriving a general formulation and a necessary and sufficient condition for an adaptively convergent process.
    \item We design AdaR, an adaptive recurrent graph model, enabling flexible test-time computing on various downstream tasks without changing model parameters.
    \item We demonstrate the superiority of AdaR on transfer tasks, outperforming baselines in both inductive and transductive settings.
\end{itemize}

% \begin{figure}[htb]
\begin{figure*}[htb]
\centering
\includegraphics[width=\linewidth]{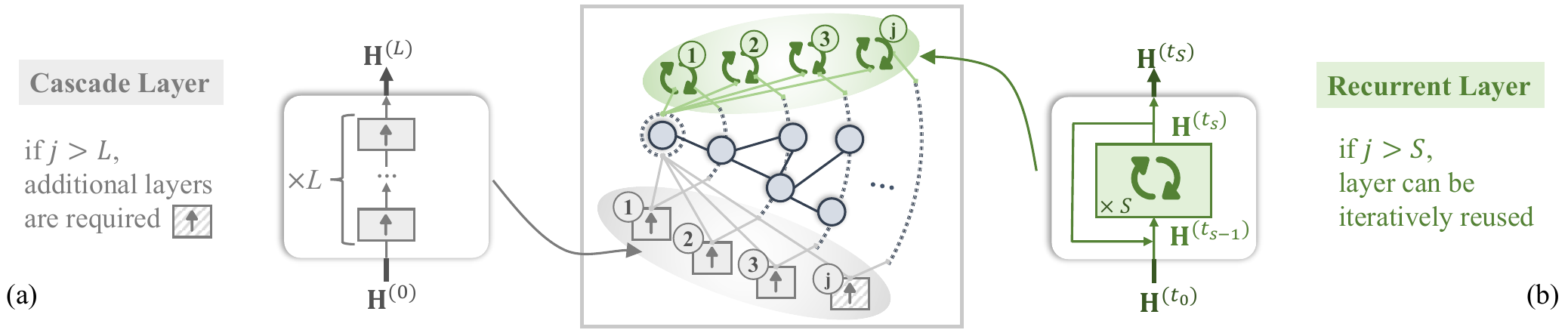}
% \vspace{-0.1in}
\caption{\textbf{Comparison Between Traditional and Recurrent Graph Model.} (a) Traditional models stack multiple cascade layers. (b) In contrast, recurrent models employ a recurrent layer, which can be iteratively reused to adapt to different receptive field requirements.}\label{fig:recurrent}
% \vspace{-0.1in}
\end{figure*}
\section{Related Work}
\subsection{Adaptation with Pure Graph Models}
\paragraph{Fine-tuning for model adjustment.}
A direct solution for adapting graph backbones to diverse downstream tasks is to fine-tune pre-trained models. Primary methods~\cite{qiu_GCCGraphContrastive_2020,hu_GPTGNNGenerativePreTraining_2020} fine-tune the entire set of model parameters. However, full fine-tuning is resource-intensive and tends to overfit when downstream tasks provide only limited labels. To overcome these limitations, parameter-efficient fine-tuning (PEFT) methods have been proposed for graph models. AdapterGNN~\yrcite{li_AdapterGNNParameterEfficientFineTuning_2024} provides a comprehensive comparison of traditional PEFT techniques in the graph learning domain and introduces dual adapter modules specifically designed for GNN architectures. G-Adapter~\yrcite{gui_GAdapterStructureawareParameterefficient_2024} further extends PEFT to graph transformers~\cite{ying_TransformersReallyPerform_2021,sun_RelievingAggregatingEffect_2025}, employing graph convolution operators to inject structural information into the tuning process.

\paragraph{Prompting for data adjustment.}
Despite their efficiency, the effectiveness of PEFT methods does not always scale favorably with increasing model size~\cite{yu_SurveyFewShotLearning_2024}. Consequently, recent studies have shifted the focus from adjusting model parameters to adapting the input data, aiming to bridge the gap between pre-training and inference. In particular, graph prompt learning proposes to adapt node features or graph structures through task-specific prompts~\cite{sun_AllOneMultiTask_2023,wang_MultiDomainGraphFoundation_2025}. For feature-level prompts, auxiliary embeddings are introduced either to unify input node features~\cite{sun_GPPTGraphPretraining_2022,sun_AllOneMultiTask_2023,fang_UniversalPromptTuning_2023} or to refine intermediate layer representations~\cite{liu_GraphPromptUnifyingPreTraining_2023,yu_MultiGPromptMultiTaskPreTraining_2024a}. For structure-level prompts, substructures such as subgraphs or pseudo nodes are learned to augment the original graph topology~\cite{sun_AllOneMultiTask_2023,tan_VirtualNodeTuning_2023,zhang_StructurePretrainingPrompt_2023,huang_PRODIGYEnablingContext_2023}. For more discussion on the pseudo nodes in adaptive graph models, please refer to Appendix~\ref{sec:app-related}.

\subsection{Adaptation with Language Models}
Language models exhibit strong generalizability across diverse downstream tasks~\cite{openai_GPT4TechnicalReport_2024}. Leveraging this capability, researchers have explored three ways to incorporate LMs in graph learning: as primary backbones, as knowledge providers for graph models, or as collaborators alongside graph models. To construct backbones with LMs, graphs are converted into sequential data through textual descriptions~\cite{li_ZeroGInvestigatingCrossdataset_2024} or structure-aware node traversals~\cite{chen_LLaGALargeLanguage_2024,gao_GraphWorth$K$_2024}. Beyond serving as backbones, LMs can also enhance graph models for downstream tasks. Specifically, some methods transfer external knowledge from LMs to graph models via strategies such as knowledge distillation~\cite{xia_OpenGraphOpenGraph_2024,pan_DistillingLargeLanguage_2024} and representation alignment~\cite{zhu_GraphCLIPEnhancingTransferability_2025}. Other methods integrate LMs and graph models, combining topological features captured by graph models with the strong generalizability of LMs~\cite{tian_GraphNeuralPrompting_2024a,zhang_GraphTranslatorAligningGraph_2024a,tang_GraphGPTGraphInstruction_2024a,huang_CanGNNBe_2024a,kong_GOFAGenerativeOneAll_2024,hu_LetsAskGNN_2024}.

\subsection{Recurrent Graph Models}
Recurrent graph models have demonstrated strong performance in various real-world applications~\cite{selsam_LearningSATSolver_2018,bresson_ResidualGatedGraph_2018}. Scarselli et al.~\yrcite{scarselli_GraphNeuralNetwork_2009} first introduced recurrent architectures to graph learning, where model parameters are constrained to satisfy contraction conditions, ensuring convergence to a fixed point via iterative updates. Following the recurrent paradigm, subsequent works relax the contraction constraints and integrate local message passing with gating mechanisms during the recurrent process~\cite{li_GatedGraphSequence_2017,ruiz_GatedGraphRecurrent_2020}. More recently, $\mathbf{N}^2$ extends recurrent models to global message passing by enabling dynamic interactions between graph nodes and learnable pseudo nodes~\cite{sun_DynamicMessagePassing_2024}. 

Despite their effectiveness, both gated methods and $\mathbf{N}^2$ assume the same fixed iteration budget for training and testing, which introduces two key limitations. First, when the optimal number of iterations is large, training incurs prohibitive memory overhead. Second, once trained, these models cannot compute with different iteration budgets, failing to satisfy the diverse receptive field requirements of downstream tasks. To empower variable iteration budgets for flexible test-time computing, we theoretically show that the model must explicitly incorporate step information, and propose AdaR to empirically validate this principle.

\section{Method}
Local message passing expands the node receptive field by limited hops per step. As a result, capturing broader graph contexts requires multiple rounds of message passing. Traditional graph models achieve this by stacking multiple cascade layers, each with its own parameters. This stacking strategy enlarges the receptive field but also increases parameter count and changes the architecture. To avoid these drawbacks, a recurrent graph model is necessary, where a single message passing layer is iteratively reused as the recurrent layer to expand the receptive field adaptively (Fig.~\ref{fig:recurrent}). In this section, we first present a general formulation of recurrent graph models, then derive the condition required for ideal recurrent behavior, and finally implement the principle in a novel recurrent model AdaR.

\subsection{Problem Setup}\label{ssec:problem-setup}
\paragraph{Notations.}
Given an input graph $\mathcal{G}=(\mathcal{V}, \mathcal{E})$. Let $\mathcal{V}=\{v_1, \dots, v_n\}$ denote the set of nodes with $|\mathcal{V}|=n$, and $\mathcal{E}=\{e_{i,j}\mid v_j \in \mathcal{N}(v_i)\}$ denote the set of edges with $|\mathcal{E}|=m$, where $\mathcal{N}(v)$ is the set of one-hop neighbors of node $v$. Each node $v\in \mathcal{V}$ is associated with a feature vector $\mathbf{x}_v \in \mathbb{R}^{d_\mathtt{in}}$, where $d_\mathtt{in}$ denotes the feature dimension. The node feature matrix is then defined as $\mathbf{X} = (\mathbf{x}_{v_1}, \dots, \mathbf{x}_{v_n})^\top \in \mathbb{R}^{n\times d_\mathtt{in}}$. Let $\mathbf{A} \in \mathbb{R}^{n\times n}$ denote the adjacency matrix of $\mathcal{G}$, where $\mathbf{A}_{i,j} = 1$ if $e_{i,j}\in \mathcal{E}$ and $0$ otherwise.

\paragraph{Graph Learning Tasks.} Let $c$ denote the number of labels or attributes in classification or regression. Each label or attribute is associated with a descriptive embedding, which is either provided as input or learned during training. Following Sun et al. \yrcite{sun_DynamicMessagePassing_2024}, given the output graph representation $\mathbf{Z}\in\mathbb{R}^{n_o\times d}$ and the target-description matrix $\mathbf{C}\in\mathbb{R}^{c\times d}$, we unify different graph learning tasks by computing the inner-product similarity $\texttt{o}(\mathbf{Z}\mathbf{C}^\top)$ as the final prediction. The function $\texttt{o}(\cdot)$ is implemented as \texttt{softmax} for classification
and the identity mapping for regression. Here, $n_o$ denotes the number of prediction targets, which equals $n$, $m+m_\texttt{neg}$, and $1$ for node-, edge-, and graph-level tasks on a single graph, respectively. $m_\texttt{neg}$ denotes the sampled negative edges which do not belong to $\mathcal{E}$.

\paragraph{Message Passing.} For both GNNs and graph transformers, the fundamental graph operators can be formulated as message passing~\cite{velickovic_MessagePassingAll_2022}. Given a central node $v_i$, the message passing result $\mathbf{h}_{v_i}^{(l)}$ at the $l$-th layer $\mathtt{f}^{(l)}$ can be formulated as
\begin{equation}\label{eq:mp}
\begin{aligned}
    \mathbf{h}_{v_i}^{(l)}&=[\mathtt{f}_{{\Theta^{(l)}}}^{(l)}(\mathbf{H}^{(l-1)},\mathbf{A})]_{v_i}\\
    &=\bigoplus_{v_j\in\mathcal{N}(v_i)}\left[\mathtt{\phi}^{(l)}(\mathbf{h}_{v_i}^{(l-1)}),\mathtt{\psi}^{(l)}(\mathbf{h}_{v_j}^{(l-1)})\right],
\end{aligned}
\end{equation}
where $\mathbf{H}^{(l)} = (\mathbf{h}_{v_1}^{(l)}, \dots, \mathbf{h}_{v_n}^{(l)})^\top \in \mathbb{R}^{n\times d}$. $\mathbf{h}_{v_i}^{(0)}=\mathbf{x}_{v_i}$. $\mathtt{\phi}^{(l)}$ and $\mathtt{\psi}^{(l)}$ denote the feature-wise transformation, such as the linear mapping. $\oplus$ can be any aggregation function permutation invariant to the ordering of nodes, such as a graph convolution operator~\cite{kipf_SemiSupervisedClassificationGraph_2017,hamilton_InductiveRepresentationLearning_2017,velickovic_GraphAttentionNetworks_2018} for message passing on the input graphs, while being a self-attention aggregator on the attention graph. $\Theta^{(l)}$ denotes the set of parameters for $\oplus$ at the $l$-th layer.

\subsection{Recurrent Graph Model} 
Let $\mathtt{F}=\mathtt{f}_\mathtt{in}\circ\mathtt{f}_\Theta\circ\cdots\circ\mathtt{f}_\Theta\circ\mathtt{f}_\mathtt{out}$ denote the whole recurrent graph model, where $\mathtt{f}_\mathtt{in}$ and $\mathtt{f}_\mathtt{out}$ are the input and output transformation, respectively. $\mathtt{f}_\Theta$ denotes the recurrent layer employed iteratively to refine the node representation. Given a graph $\mathcal{G}$, a target representation $\mathbf{C}$, and the corresponding ground-truth labels, the learning objective is to optimize the recurrent model $\mathtt{F}$ such that the inner-product relations between the output representations and the target representations are properly adjusted, thereby minimizing the prediction loss.

\paragraph{Recurrent Message Passing.}
The recurrent layer can be regarded as the discrete approximation of a continuous-time dynamical system. For time $t\in[0,T]$, let $\{\mathbf{H}^{^{*}{(t)}}\}$ denote the target trajectory of the system governed by $\frac{d}{dt}\mathbf{H}^{^{*}{(t)}}=\mathtt{v}(\mathbf{H}^{^{*}{(t)}},\mathcal{C},t)$, $\mathbf{H}^{^{*}{(0)}}=\mathbf{H}^{(0)}$, where $\mathtt{v}(\cdot)$ is a continuous mapping that assigns the current representation $\mathbf{H}^{^{*}{(t)}}$ a velocity matrix. $\mathcal{C}$ denotes the set of conditions such as the adjacent matrix $\mathbf{A}$. This process can be further discretized. Given a total recurrent step budget $S$, and step size $\Delta t=T/S$, the target discrete solution at the recurrent step $s$ is $\mathbf{H}^{^{*}{(t_s)}}$, where $t_s=s\Delta t,s=0,1,\cdots,S$. Following the same form, Eq.~\ref{eq:mp} can be reformulated at the $s$-th step as
\begin{equation}\label{eq:mp-recurrent}
\begin{aligned}
    \mathbf{h}_{v_i}^{(t_s)}
    &=[\mathtt{f}_\Theta(\mathbf{H}^{(t_{s-1})},\mathcal{C}, t_{s-1})]_{v_i}\\
    &=\bigoplus_{v_j\in\mathcal{N}(v_i)}\left[\phi(\mathbf{h}_{v_i}^{(t_{s-1})}),\psi(\mathbf{h}_{v_j}^{(t_{s-1})})\right].
\end{aligned}
\end{equation}
Different from traditional message passing, $\mathtt{f}_\Theta$ is employed recursively, with the parameter set $\Theta$ shared across the message-passing steps. A recurrent message-passing process aims to produce trajectories that stay close to the continuous reference $\{\mathbf{H}^{^{*}{(t_s)}}\}^{S}_{s=0}$ at every step, which can be formally defined as uniform convergence.
\begin{definition}[Uniform Convergence]
Recall that for time $t\in[0,T]$, a step budget $S$, and step size $\Delta t = T/S$, the target discrete solution at the recurrent step $s$ is $\mathbf{H}^{^{*}(t_s)}$ with $t_s = s\Delta t$. Let $\{\mathbf{H}^{(t_s)}\}^{S}_{s=0}$ be the discrete trajectory generated by a recurrent layer. We say that ${\mathbf{H}^{(t_s)}}$ converges uniformly to ${\mathbf{H}^{^{*}(t_s)}}$ if
\begin{equation}\label{eq:uni-conv}
    \sup_{0\le s\le S}\|\mathbf{H}^{(t_s)}-\mathbf{H}^{^{*}{(t_s)}}\|_\mathtt{F}\le C\Delta t,
\end{equation}
for some constant $C$ independent of $S$. $\|\cdot\|_\mathtt{F}$ denotes the Frobenius norm. The approximation error in uniform convergence vanishes uniformly over the entire time range. As $S\to\infty$, $C\Delta t\to 0$.
\end{definition}

% \begin{figure}[htb]
\begin{figure*}[htb]
\centering
\includegraphics[width=0.98\linewidth]{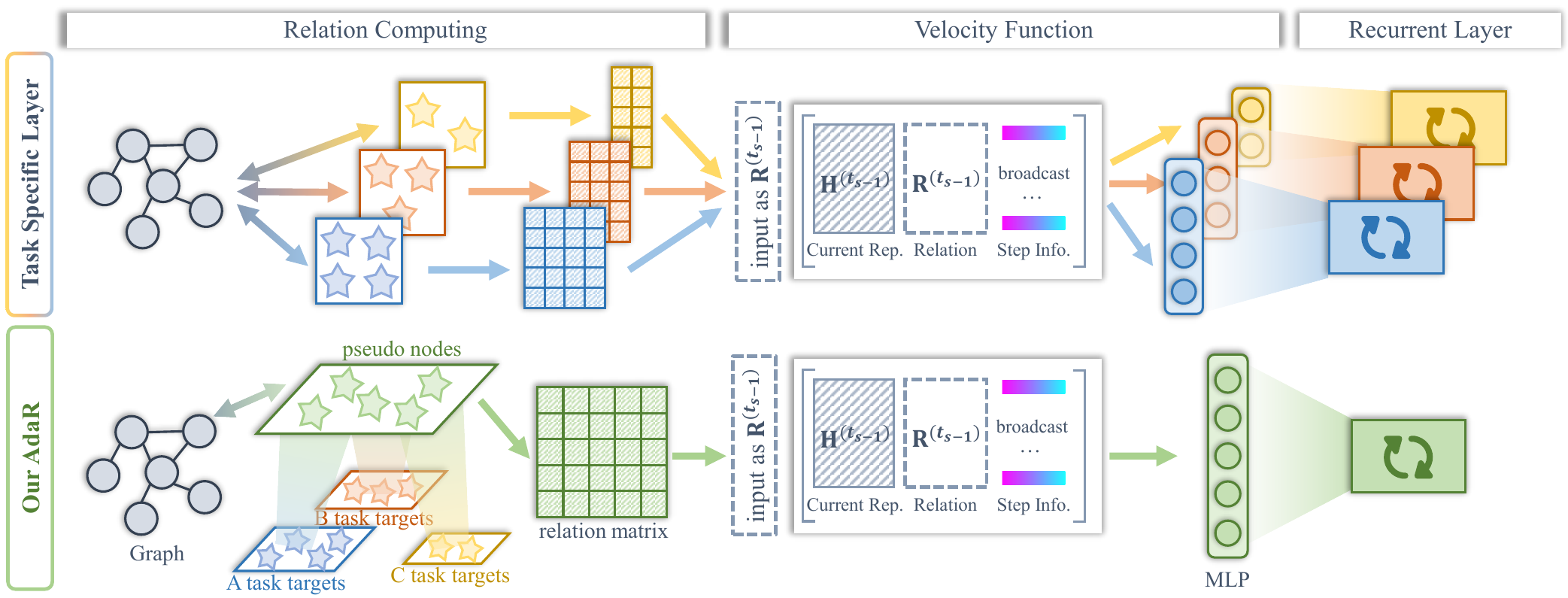}
% \vspace{-0.1in}
\caption{\textbf{Adaptability of AdaR.} To enable a target-aware recurrent process, AdaR introduces relations with targets to compute representation updates. However, the number of targets varies across tasks. To tackle this problem, a proxy-based relation computing strategy is designed, where a fixed number of pseudo nodes aggregate information from different numbers of targets, serving as target proxies for relation computing.}\label{fig:main}
% \vspace{-0.1in}
\end{figure*}
\paragraph{Step Information.}
The mapping in the target continuous system assigns a different velocity matrix at each step. This indicates that the recurrent process in Eq.~\ref{eq:mp-recurrent} should depend on the step information $t_s=s\Delta t$ to achieve uniform convergence to specific target systems. Formally, we show that a recurrent layer without step information cannot converge to the target system uniformly, while a layer with step information can. For the target trajectory to be meaningful in the sense of convergence, we assume it is generated by a continuously differentiable Lipschitz function. This assumption is mild and non-restrictive, as it is a standard condition ensuring the well-posedness of trajectory dynamics.
\begin{lemma}[Non-uniform Convergence without Step Information]\label{thrm:non-uni-conv}
Let $\mathbf{f}_\Theta(\mathbf{H}^{(t_{s-1})},\mathcal{C})$ be a recurrent layer that generates a discrete trajectory $\{{\mathbf{H}^{(t_s)}}\}^{S}_{s=0}$ without the step information $t_s=s\Delta t$. There exist a target trajectory $\{\mathbf{H}^{^{*}(t_s)}\}^{S}_{s=0}$ generated by a continuously differentiable Lipschitz function and a constant $\varepsilon>0$ such that, for sufficiently large $S$, $\sup_{0\le s\le S}\|\mathbf{H}^{(t_s)}-\mathbf{H}^{^{*}{(t_s)}}\|_\mathtt{F}\ge\varepsilon$.
\end{lemma}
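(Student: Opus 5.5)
The plan rests on one structural observation. Because $\mathtt{f}_\Theta(\cdot,\mathcal{C})$ receives neither $t_s$ nor $\Delta t$, the generated trajectory does not depend on the budget $S$: for every $S$ and every $s\le S$ we have $\mathbf{H}^{(t_s)}=\mathtt{f}_\Theta^{\,s}(\mathbf{H}^{(0)})$, where $\mathtt{f}_\Theta^{\,s}$ is the $s$-fold composition with $\mathcal{C}$ fixed. In contrast, the target samples $\mathbf{H}^{^{*}(sT/S)}$ move with $S$. The target trajectory may be chosen after $\mathtt{f}_\Theta$ and $\mathbf{H}^{(0)}$ are fixed, since the statement only asserts existence. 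I will therefore split into two cases according to whether $\mathbf{H}^{(0)}$ is a fixed point of $\mathtt{f}_\Theta$.

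\emph{Case 1: $\mathtt{f}_\Theta(\mathbf{H}^{(0)})\neq\mathbf{H}^{(0)}$.} Take the constant target $\mathtt{v}\equiv 0$, so that $\mathbf{H}^{^{*}(t)}=\mathbf{H}^{(0)}$ for all $t$. This velocity field is trivially continuously differentiable and Lipschitz. At step $s=1$, which exists once $S\ge1$, the error is $\|\mathtt{f}_\Theta(\mathbf{H}^{(0)})-\mathbf{H}^{(0)}\|_\mathtt{F}=:\delta>0$. This quantity is independent of $S$. Setting $\varepsilon=\delta$ gives $\sup_{s}\|\mathbf{H}^{(t_s)}-\mathbf{H}^{^{*}(t_s)}\|_\mathtt{F}\ge\varepsilon$ for every $S\ge 1$.

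\emph{Case 2: $\mathtt{f}_\Theta(\mathbf{H}^{(0)})=\mathbf{H}^{(0)}$.} Then $\mathbf{H}^{(t_s)}=\mathbf{H}^{(0)}$ for all $s$ and all $S$. Fix any nonzero $\mathbf{E}\in\mathbb{R}^{n\times d}$ and take the constant velocity $\mathtt{v}\equiv\mathbf{E}$, which is again $C^1$ and Lipschitz. This gives $\mathbf{H}^{^{*}(t)}=\mathbf{H}^{(0)}+t\mathbf{E}$. At the final step $s=S$, where $t_S=T$, the error is $T\|\mathbf{E}\|_\mathtt{F}=:\varepsilon>0$, again for every $S\ge1$.

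In both cases $\varepsilon$ depends only on $\mathtt{f}_\Theta$, $\mathbf{H}^{(0)}$ and the chosen target, and not on $S$. The claim therefore holds for all sufficiently large $S$, in fact for all $S\ge1$, as stated.

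The only delicate point is justifying the $S$-independence of the discrete trajectory. This requires reading the hypothesis ``without step information'' as meaning that $\mathtt{f}_\Theta$ also does not see $\Delta t$ or $S$, i.e. the update map is literally the same for every budget. I will state this reading explicitly, since it is exactly what the lemma's hypothesis $\mathtt{f}_\Theta(\mathbf{H}^{(t_{s-1})},\mathcal{C})$ encodes. The split into cases is needed because no single target works for all layers: a constant target defeats any non-stationary layer, and a moving target defeats a stationary one. A sharper variant, taking $\mathtt{v}\equiv\mathbf{E}$ with $\mathbf{E}\neq\mathtt{f}_\Theta(\mathbf{H}^{(0)})-\mathbf{H}^{(0)}$ and examining steps $s=1$ and $s=S$, would merge the cases but is unnecessary.
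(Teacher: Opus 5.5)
Your proof is correct, and it is more direct than the paper's. The paper Taylor-expands the target at $t_0$ and looks only at the first step. It writes $\mathbf{H}^{(t_1)}=\mathbf{H}^{(t_0)}+\delta$ with $\delta$ independent of $\Delta t$, which is the same reading of ``no step information'' that you state explicitly. It then argues by contradiction against the $C\Delta t$ bound: $\|\delta/\Delta t-\mathtt{v}\|\le C+K_1\Delta t$ forces $\delta=0$, and it concludes that only targets with zero initial velocity can be tracked. This framing explains why a step-agnostic update cannot scale like $\Delta t$, which motivates the $\Delta t$-scaled residual form used later, and it aims at a claim covering every target with nonzero initial velocity.

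As written, however, that route yields the constant lower bound only when $\delta\neq 0$, where the step-1 error tends to $\|\delta\|_\mathtt{F}$. When $\delta=0$ the step-1 inequality becomes $\|\mathtt{v}\|\le C+K_1\Delta t$, which is no contradiction. The step-1 error is then only $O(\Delta t)$, so some later step must be examined, and the paper never does this. Your Case 2 supplies exactly this missing piece: at $s=S$ the target has moved by $T\|\mathbf{E}\|_\mathtt{F}$ while the layer has not moved. Your observation that the discrete trajectory is $\mathtt{f}_\Theta^{\,s}(\mathbf{H}^{(0)})$ for every budget also removes the Taylor remainders entirely and gives the bound for all $S\ge 1$.

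One correction to your closing remark: the case split is convenient but not necessary. Your merged variant works with any $\mathbf{E}\neq 0$, and the extra condition on $\mathbf{E}$ is not needed. The step-1 error $\|\mathtt{f}_\Theta(\mathbf{H}^{(0)})-\mathbf{H}^{(0)}-(T/S)\mathbf{E}\|_\mathtt{F}$ tends to $\|\mathtt{f}_\Theta(\mathbf{H}^{(0)})-\mathbf{H}^{(0)}\|_\mathtt{F}$ regardless of $\mathbf{E}$, and the stationary case is handled at $s=S$. So a single linear target defeats every step-agnostic layer, with $\varepsilon$ allowed to depend on the layer as the lemma permits. This contradicts your sentence that no single target works for all layers. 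It is also the rigorous form of the paper's claim about targets with nonzero initial velocity.
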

\begin{theorem}[Uniform Convergence with Step Information]\label{thrm:uni-conv}
For any target trajectory $\{\mathbf{H}^{^{*}(t_s)}\}^{S}_{s=0}$ generated by a continuously differentiable Lipschitz function, there exists a recurrent layer $\mathbf{H}^{(t_{s})}=\mathbf{f}_\Theta(\mathbf{H}^{(t_{s-1})},\mathcal{C},t_{s-1})$ such that ${\mathbf{H}^{(t_s)}}$ converges uniformly to ${\mathbf{H}^{^{*}(t_s)}}$.
\end{theorem}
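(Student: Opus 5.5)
The plan is to construct the recurrent layer explicitly as a forward Euler step of the target dynamics and then prove the uniform bound with the standard global-error argument for one-step methods.

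\textbf{Construction.} Since the layer is allowed to depend on $t_{s-1}$, I take
\begin{equation*}
\mathbf{f}_\Theta(\mathbf{H},\mathcal{C},t)=\mathbf{H}+\Delta t\,\mathtt{v}(\mathbf{H},\mathcal{C},t),
\end{equation*}
with initial state $\mathbf{H}^{(t_0)}=\mathbf{H}^{(0)}=\mathbf{H}^{*(0)}$. Here $\mathtt{v}$ is the continuously differentiable velocity field that generates the target trajectory, and I assume it is $L$-Lipschitz in $\mathbf{H}$ uniformly in $t\in[0,T]$. If the layer is to be a parameterized message-passing map rather than $\mathtt{v}$ itself, I would add a remark. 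On the compact tube around the trajectory, a universal approximator can match $\mathtt{v}$ to accuracy $O(\Delta t)$. That extra error enters the bound below as an additive $O(\Delta t)$ per-step term, so it does not change the conclusion.

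\textbf{Local truncation error.} Because $\mathbf{H}^{*}$ is $C^1$ with velocity $\mathtt{v}$, and $\mathtt{v}$ is Lipschitz in both arguments along the trajectory, $\frac{d}{dt}\mathbf{H}^{*}$ is Lipschitz in $t$ with some constant $M$. I would check this via $\|\mathtt{v}(\mathbf{H}^*(t),t)-\mathtt{v}(\mathbf{H}^*(t'),t')\|\le L\|\mathbf{H}^*(t)-\mathbf{H}^*(t')\|+L_t|t-t'|$, together with boundedness of $\mathtt{v}$ on the compact trajectory. Taylor's theorem in integral form then gives
\begin{equation*}
\tau_s=\mathbf{H}^{*(t_s)}-\mathbf{H}^{*(t_{s-1})}-\Delta t\,\mathtt{v}(\mathbf{H}^{*(t_{s-1})},\mathcal{C},t_{s-1}),\qquad \|\tau_s\|_\mathtt{F}\le \tfrac{M}{2}\Delta t^2 .
\end{equation*}

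\textbf{Error recursion.} Let $e_s=\mathbf{H}^{(t_s)}-\mathbf{H}^{*(t_s)}$. Subtracting the two updates gives $e_s=e_{s-1}+\Delta t\,[\mathtt{v}(\mathbf{H}^{(t_{s-1})},t_{s-1})-\mathtt{v}(\mathbf{H}^{*(t_{s-1})},t_{s-1})]-\tau_s$. Hence
\begin{equation*}
\|e_s\|_\mathtt{F}\le(1+L\Delta t)\|e_{s-1}\|_\mathtt{F}+\tfrac{M}{2}\Delta t^2 .
\end{equation*}
Unrolling from $e_0=0$ and applying a discrete Grönwall inequality with $(1+L\Delta t)^s\le e^{Ls\Delta t}\le e^{LT}$ yields
\begin{equation*}
\|e_s\|_\mathtt{F}\le \frac{M}{2L}\,(e^{LT}-1)\,\Delta t \quad\text{for all } 0\le s\le S.
\end{equation*}
This is Eq.~\ref{eq:uni-conv} with $C=\frac{M}{2L}(e^{LT}-1)$, which is independent of $S$.

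\textbf{Main obstacle.} The delicate step is the local truncation bound. Under a merely $C^1$ hypothesis, $\ddot{\mathbf{H}}^*$ may not exist, so I would use the Lipschitz-in-time property of $\dot{\mathbf{H}}^*$ rather than a second-derivative bound. I would also state explicitly which Lipschitz assumptions on $\mathtt{v}$ (in $\mathbf{H}$ and in $t$) are meant by the lemma's hypothesis. If the hypothesis only gives Lipschitz continuity in $\mathbf{H}$, the fallback is uniform continuity of $\dot{\mathbf{H}}^*$ on $[0,T]$. That still gives $\sup_s\|e_s\|\to0$, but only at rate $o(1)$, so obtaining the $C\Delta t$ rate requires the time-Lipschitz assumption.
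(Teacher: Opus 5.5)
Your proposal is correct and follows the paper's argument: a residual forward-Euler layer, an $O(\Delta t^2)$ local truncation term, the recursion $\|\mathbf{e}_{s}\|\le(1+L\Delta t)\|\mathbf{e}_{s-1}\|+O(\Delta t^2)$, and a discrete Gr\"onwall bound giving $C\propto(e^{LT}-1)/L$. You differ in two small ways, both improvements: you use $\mathtt{v}$ itself, avoiding the paper's universal approximator with its extra $\varepsilon\Delta t$ term and limit $\varepsilon\to0$ (which quietly makes $\Theta$ depend on $S$), and you justify the $O(\Delta t^2)$ remainder through time-Lipschitzness of $\dot{\mathbf{H}}^*$, whereas the paper only asserts it from a Taylor expansion.
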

Please refer to Appendix~\ref{sec:app-proof} for the proof. Based on Lemma~\ref{thrm:non-uni-conv} and Theorem~\ref{thrm:uni-conv}, we can see that previous recurrent graph models fail to converge with different iteration budgets due to their miscapturing of the step information. This observation motivates the design of an adaptive recurrent graph model.

\subsection{AdaR for Flexible Test-time Computing}
In this section, we present our adaptive recurrent graph model, termed \textbf{AdaR}, following the recurrent message-passing paradigm. By explicitly encoding step information, AdaR can operate under varying iteration budgets without retraining or architectural modification, thereby enabling flexible test-time computing for different downstream tasks.

\subsubsection{Recurrent Update with the Velocity Function}
Without loss of generality, a recurrent layer accessible to step information can be written in the residual form
\begin{equation}
\begin{aligned}
\mathbf{H}^{(t_{s})}
&=\mathbf{f}_\Theta(\mathbf{H}^{(t_{s-1})},\mathcal{C},t_{s-1})\\
&=\mathbf{H}^{(t_{s-1})}+\mathtt{v}_\Theta(\mathbf{H}^{(t_{s-1})},\mathcal{C},t_{s-1})\Delta t.
\end{aligned}
\end{equation}
Here, $\mathtt{v}_\Theta(\cdot)$ computes the velocity for nodes and updates $\mathbf{H}^{(t_{s-1})}$ to adjust their relation to the targets, ensuring the optimal output-target relation at the final step. To achieve this, $\mathtt{v}_\Theta(\cdot)$ should explicitly depend on
\begin{itemize}
    \item the current representation $\mathbf{H}^{(t_{s-1})}$,
    \item the representation-target relations $\mathbf{R}^{(t_{s-1})}$, ensuring adaptive updates based on specific inputs and targets,
    \item the step information $\mathbf{S}_{s-1,\cdot}$, ensuring convergent approximation towards the targets,
\end{itemize}
where $\mathbf{S}\in\mathbb{R}^{s\times d}$ denotes the step matrix. The step velocity function $\texttt{v}_\Theta(\cdot)$ can thus be formulated in a simple form
\begin{equation}\label{eq:velocity}
\begin{aligned}
&\texttt{v}_\Theta(\mathbf{H}^{(t_{s-1})},\mathcal{C},t_{s-1})\\
&=\texttt{NL}\Big(
\underset{\texttt{current rep.}}{\underbrace{\texttt{MP}(\mathbf{H}_{\ }^{(t_{s-1})})}}\big\|
\underset{\texttt{relation}}{\underbrace{\mathbf{R}_{\ }^{(t_{s-1})}}}\big\|
\underset{\texttt{step info.}}{\underbrace{\mathbf{1}_n\mathbf{S}_{s-1,\cdot}^\top}}\Big),
\end{aligned}
\end{equation}
where $\mathcal{C}=\{\mathbf{A},\mathbf{R}^{(t_{s})},\mathbf{S}_{s,\cdot}\}$, $\texttt{NL}$ can be any non-linear transformation, such as a linear layer combined with $\texttt{ReLU}$. $\texttt{MP}$ denotes message passing function, which is implemented following $\mathbf{N}^2$~\cite{sun_DynamicMessagePassing_2024}. Additional implementation details are provided in Appendix~\ref{sec:app-mp}. $\|$ denotes the concatenation operation. Here, $\mathbf{1}_n\in\mathbb{R}^n$ is an all-one vector, such that $\mathbf{1}_n \mathbf{S}_{s-1,\cdot}^{\top}$ broadcasts the step embedding to all nodes. In the following context, we show how to prepare the last two features for computing the velocity.

\subsubsection{Encoding Step Information}
Given the step information $t_s=s\Delta t=sT/S$, two common design choices include fixing the unit time step $\Delta t$ or fixing the total time $T$. Specifically, by fixing the total time ({\it e.g.}, $T=1$), different choices of the iteration budget $S$ correspond to different discretizations of the same target trajectory, ensuring that the model is always trained to produce $\mathbf{H}^{(1)}$ as the final representation. In contrast, by fixing the unit time ({\it i.e.}, $\Delta t=1$), different iteration budgets $S$ correspond to the same discretization scheme of different trajectories. Although this formulation simplifies the learning of $\texttt{v}_\Theta$ to produce velocity within a fixed unit time, it may lead to out-of-distribution behavior when generalizing to unseen time and budgets. Based on the stability under different iteration budgets, AdaR fixes the total time $T=1$. In Appendix~\ref{sec:app-relative}, we further provide theoretical and empirical analysis to validate the usage of a fixed total time rather than a fixed unit time step.

To map step scalers into the $d$-dimensional space, we propose to adopt the sinusoidal position encoding function~\cite{vaswani_AttentionAllYou_2017}, yielding the step matrix $\mathbf{S}$ as
\begin{equation}\label{eq:sine}
\mathbf{S}_{s,j}=
\begin{cases}
    \sin{(\omega_k\cdot t_s)}, \texttt{if}\ j=2k\\
    \cos{(\omega_k\cdot t_s)}, \texttt{if}\ j=2k+1,\\
\end{cases}
\end{equation}
where $\omega_k=\frac{1}{10000^{2k/d}}$. The sinusoidal encoding provides a continuous representation that is smoothly conditioned on the step information $t_s$. Compared to learnable step embeddings, it avoids overfitting to a fixed number of iterations and naturally supports arbitrary step budgets without introducing additional parameters.

\subsubsection{Proxy-based Relation Computing}
Following the task formulation in Section~\ref{ssec:problem-setup}, the representation-target relation can be formulated as $\mathbf{H}^{(t_{s-1})}\mathbf{C}^\top$. However, the number of targets varies across different graph learning tasks, making it infeasible for $\texttt{v}_\Theta$ to model node-wise velocities using a fixed set of parameters.

To address this issue, we introduce a set of pseudo nodes as proxies for targets (Fig.~\ref{fig:main}). These pseudo nodes aggregate information from the target representation via message passing and update their own representations accordingly. As a result, the pseudo nodes can be regarded as abstractions of targets, serving as fixed-size proxies for relation computation. Let $\mathbf{P}^{(t_{s})}\in\mathbb{R}^{n_{\texttt{c}}\times d}$ denote the pseudo-node representations, and $\mathbf{P}^{(t_0)}$ be a learnable initialization. The computation of the pseudo-node representations can be formulated as
\begin{equation}\label{eq:proxy-abstract}
\begin{aligned}
&\mathbf{P}^{(t_{s})}
=\mathbf{P}^{(t_{s-1})}+\texttt{v}_\mathbf{\beta}(\mathbf{P}^{(t_{s-1})},\mathbf{S}_{s-1,\cdot},t_{s-1})\Delta t,\\
&\texttt{v}_\mathbf{\beta}(\cdot)
\!=\!\texttt{NL}\Big(\!\!\!
\underset{\texttt{current rep.}}
{\underbrace{\texttt{AGG}(\mathbf{C}_{\ })}}\!\!\big\|
\underset{\texttt{relation}}
{\underbrace{\mathbf{P}_{\ }^{(t_{s-1})}\mathbf{P}_{\ }^{(t_{s-1})\top}}}\big\|\!\!
\underset{\texttt{step info.}}
{\underbrace{\mathbf{1}_n\mathbf{S}_{s-1,\cdot}^\top}}\!\!\Big).
\end{aligned}
\end{equation}
The update process follows the velocity function in Eq.~\ref{eq:velocity}, with the aggregation function $\texttt{AGG}$ implemented following $\mathbf{N}^2$~\cite{sun_DynamicMessagePassing_2024}. Further details are presented in Appendix~\ref {ssec:app-mp-g}. In consequence, the relation between current graph node representations and targets can be formulated as
\begin{equation}
    \mathbf{R}^{(t_{s-1})}=\mathbf{H}^{(t_{s-1})}\mathbf{P}^{(t_{s})\top}.
\end{equation}
In practice, to ensure the adaptability to different targets, we also recurrently update the target representation as $\mathbf{C}^{(t_s)}$. Similar to Eq.~\ref{eq:velocity}, the update of targets also depends on their current representation, step information, and their relation with graph nodes. For brevity and to avoid redundancy, we provide the details in Appendix~\ref {ssec:app-mp-v}.

\begin{table*}
\caption{\textbf{Evaluation Results in the Zero-shot Inductive Setting (measured by accuracy: \%).} $\dagger$ denotes the average number of parameters among the best model configurations on each dataset.}
\label{tab:main-zero}
\centering
\begin{sc}
\begin{small}
\resizebox{0.9\linewidth}{!}{
\begin{tabular}{llrrcccccc} 
\hline
       &    & \begin{tabular}[r]{@{}r@{}}Trainable\\Params\end{tabular}  & \begin{tabular}[r]{@{}r@{}}Total\\Params\end{tabular} & arxiv-year       & WikiCS     & SportsFit  & Cora       & CiteSeer         & DBLP              \\ 
\hline
\multirow{4}{*}{\begin{tabular}[c]{@{}l@{}}GNN\\Backbones\end{tabular}} & GCN  &   100\% & 0.4M     & 21.79$_{\color{gray}\pm2.66}$       & 27.21$_{\color{gray}\pm0.23}$ & 11.48$_{\color{gray}\pm0.16}$ & 37.91$_{\color{gray}\pm0.74}$ & 35.74$_{\color{gray}\pm2.42}$       & 51.69$_{\color{gray}\pm0.36}$        \\
& GAT     &    100\% & 0.4M      & 18.96$_{\color{gray}\pm2.58}$       & 31.33$_{\color{gray}\pm0.19}$ & 11.95$_{\color{gray}\pm0.52}$ & 39.65$_{\color{gray}\pm0.71}$ & 36.52$_{\color{gray}\pm0.15}$       & 55.93$_{\color{gray}\pm1.59}$        \\
& $\mathbf{N^2}$    &   100\% & 0.4M        & 13.80$_{\color{gray}\pm0.49}$        & 22.39$_{\color{gray}\pm0.93}$ & 7.87$_{\color{gray}\pm0.97}$  & 29.30$_{\color{gray}\pm0.27}$  & 22.26$_{\color{gray}\pm1.05}$       & 36.19$_{\color{gray}\pm1.81}$        \\
& GraphSAGE  &   100\% & 0.6M   & 18.41$_{\color{gray}\pm0.86}$       & 30.44$_{\color{gray}\pm0.97}$ & 12.20$_{\color{gray}\pm0.59}$  & 36.46$_{\color{gray}\pm0.14}$ & 33.23$_{\color{gray}\pm1.09}$       & 54.15$_{\color{gray}\pm1.72}$        \\
& GCNII  &   100\% & 0.5M   & 25.59$_{\color{gray}\pm0.52}$       & 11.57$_{\color{gray}\pm1.46}$ & 13.33$_{\color{gray}\pm0.75}$  & 37.04$_{\color{gray}\pm1.48}$ & 23.82$_{\color{gray}\pm0.42}$       & 37.37$_{\color{gray}\pm0.79}$        \\
\hline
\multirow{2}{*}{\begin{tabular}[c]{@{}l@{}}Self-supervised\\Methods\end{tabular}} & DGI     &   100\% & 0.4M      & OOM              & $\ \ $8.26$_{\color{gray}\pm1.42}$  & OOM   & 15.47$_{\color{gray}\pm1.07}$ & 26.81$_{\color{gray}\pm0.71}$       & 36.78$_{\color{gray}\pm1.99}$        \\
& GRACE   &   100\% & 0.4M      & OOM              & $\ \ $7.26$_{\color{gray}\pm0.61}$  & OOM   & 19.68$_{\color{gray}\pm2.76}$ & 25.39$_{\color{gray}\pm0.70 }$       & 26.46$_{\color{gray}\pm1.58}$        \\
\hline
\multirow{8}{*}{\begin{tabular}[c]{@{}l@{}}Foundation\\Models\end{tabular}} & Vicuna-7B  &   $\ - \ $ & 6.7B    &  $-$ & 29.31$_{\color{gray}\pm0.49}$ & 30.15$_{\color{gray}\pm0.16}$ & 14.39$_{\color{gray}\pm0.74}$ &  $-$ &  $-$  \\
& LLaGA   &   0.02\% & 6.7B      & 21.20$_{\color{gray}\pm0.91}$        & $\ \ $2.65$_{\color{gray}\pm0.13}$  & $\ \ $5.45$_{\color{gray}\pm0.67}$  & 14.05$_{\color{gray}\pm0.57}$ & 15.62$_{\color{gray}\pm0.15}$       & 11.55$_{\color{gray}\pm0.89}$        \\
& OFA    &   100\% & 5.1M       & $\ \ $4.67$_{\color{gray}\pm0.14}$        & 36.89$_{\color{gray}\pm1.37}$ & 12.65$_{\color{gray}\pm1.41}$ & 39.45$_{\color{gray}\pm1.14}$ & 55.03$_{\color{gray}\pm2.09}$       & 51.01$_{\color{gray}\pm1.82}$        \\
& GFT    &   100\% & 0.4M$\dagger$       & 10.59$_{\color{gray}\pm1.61}$        & $\ \ $6.77$_{\color{gray}\pm1.09}$ & $\ \ $8.71$_{\color{gray}\pm1.49}$ & 25.15$_{\color{gray}\pm0.24}$ & 18.62$_{\color{gray}\pm0.45}$       & 31.75$_{\color{gray}\pm0.40}$        \\
& MDGFM   &   100\% & 0.4M$\dagger$        & 23.73$_{\color{gray}\pm1.15}$        & 22.87$_{\color{gray}\pm0.52}$ & 10.37$_{\color{gray}\pm0.72}$ & 10.66$_{\color{gray}\pm1.39}$ & 17.46$_{\color{gray}\pm0.87}$       & 20.84$_{\color{gray}\pm2.75}$        \\
& RiemannGFM &   100\% & 0.4M   & $\ \ $1.01$_{\color{gray}\pm0.01}$        & $\ \ $4.26$_{\color{gray}\pm0.49}$  & $\ \ $3.83$_{\color{gray}\pm0.02}$  & $\ \ $1.19$_{\color{gray}\pm0.05}$  & 11.76$_{\color{gray}\pm0.34}$       & 38.68$_{\color{gray}\pm0.37}$        \\
& UniGTE   &   0.54\% & 13.7B     &  $-$ & 34.91$_{\color{gray}\pm0.42}$ & 29.89$_{\color{gray}\pm0.15}$ & 21.29$_{\color{gray}\pm0.69}$ &  $-$ &  $-$  \\ 
% \hline
& \textbf{AdaR (Ours)} &   100\% & 0.4M  & \textbf{36.15$_{\color{gray}\pm0.52}$}       & \textbf{37.81$_{\color{gray}\pm0.94}$} & \textbf{31.56$_{\color{gray}\pm0.61}$} & \textbf{41.28$_{\color{gray}\pm0.82}$} & \textbf{59.39$_{\color{gray}\pm1.08}$}       & \textbf{59.83$_{\color{gray}\pm1.02}$}        \\
\hline
\end{tabular}
}
\end{small}
\end{sc}
\end{table*}
\subsection{Training Strategy}
Training a recurrent model requires supervision at every recurrent step. To achieve this, we propose to use the gradient of task loss with respect to the intermediate representations $\mathbf{H}^{(t_s)}$ as supervision signals, and align it with the predicted step displacement $-\texttt{v}_\Theta(\mathbf{H}^{(t_{s-1})},\mathcal{C},t_{s-1})\Delta t$. Let $\mathcal{L}_{\texttt{task}}(\mathbf{Z})$ denote a task-dependent loss computed on the representation $\mathbf{Z}$. The overall optimization objective is defined as
\begin{equation}\label{eq:loss}
\begin{aligned}
&\mathcal{L}
=\mathcal{L}_\texttt{step}+\mathcal{L}_\texttt{full}+\mathcal{L}_\texttt{task}(\mathbf{H}^{(t_{S})}),\\
&\mathcal{L}_\texttt{step}
=\frac{1}{S}\sum_{s=1}^{S}
\left\|
\underset{\texttt{target grad.}}
{\underbrace{\nabla\mathbf{H}_{\ _{\ _{\ _{\ _{\ }}}}}^{(t_{s-1})}}}
\!-\!
\underset{\texttt{pred. grad.}}
{\underbrace{\!\left(\!-\Delta\mathbf{H}^{(t_{s})}\right)}}\right\|_1,\\
&\mathcal{L}_\texttt{full}
=\left\|
\underset{\texttt{target grad.}}
{\underbrace{\nabla\mathbf{H}_{\ _{\ _{\ _{\ _{\ }}}}}^{(t_0)}}}
-
\underset{\texttt{pred. grad.}}
{\underbrace{\left(\mathbf{H}^{(t_{0})}-\mathbf{H}^{(t_{S})}\right)}}\right\|_1,\\
&\nabla\mathbf{H}^{(t_s)}
=\frac{\partial\mathcal{L}_\texttt{task}(\mathbf{H}^{(t_{s})})}{\partial\mathbf{H}^{(t_{s})}},\\
&\Delta\mathbf{H}^{(t_{s})}
\!=\!\texttt{v}_\Theta(\mathbf{H}^{(t_{s-1})},\mathcal{C},t_{s-1})\Delta t=\mathbf{H}^{(t_{s})}-\mathbf{H}^{(t_{s-1})}.\\
\end{aligned}
\end{equation}
Here, $\mathcal{L}_\texttt{step}$ aligns the negative predicted step displacement $-\Delta \mathbf{H}^{(t_s)}$ with the task gradient $\nabla\mathbf{H}^{(t_{s-1})}$ at each recurrent step, encouraging every recurrent update to move the representation in a descent direction of $\mathcal{L}_\texttt{task}$. $\mathcal{L}_\texttt{full}$ enforces global consistency by matching the accumulated displacement of the entire recurrent process with the task gradient at the initial representation. $\mathcal{L}_{\texttt{task}}$ is directly optimized to ensure task performance and guide the learning process.

\section{Experiment}
In this section, we evaluate AdaR on transfer tasks under both inductive and transductive settings\footnote{\url{https://github.com/sunjss/AdaR}}, verifying its effectiveness in handling the varying receptive-field requirements at the graph level and the node level, respectively. Graph-level diversity stems from the varying optimal model depths across graphs (Fig.~\ref{fig:teaser}). For the node-level diversity, different nodes from the same graph may require different receptive fields. This is especially common in heterophilic graphs, where informative nodes can be located several hops away from the target node, while in homophilic graphs, useful nodes are often local.

\subsection{Zero-shot Inductive Transfer}
\paragraph{Experimental Setup.}
Both AdaR and baseline methods are jointly trained on four datasets (arXiv, bookhistory, amazon-ratings, and PubMed)~\cite{chen_TextspaceGraphFoundation_2024}. After training, the models are evaluated in the zero-shot setting on downstream datasets (arxiv-year, WikiCS, SportsFit, Cora, CiteSeer, and DBLP)~\cite{chen_TextspaceGraphFoundation_2024}. Statistics of the datasets are summarized in Tab.~\ref{tab:statistics}.

\begin{table*}
    \caption{\textbf{Evaluation Results in the Supervised Transductive Setting (measured by ROC-AUC except accuracy for amz-ratings: \%).}}
    \label{tab:main-trans}
    % \vskip -0.1in
    \subtable[Heteropilic Graphs\label{tab:het-node}]{
        \centering
        \resizebox{0.49\textwidth}{!}{
        % \begin{table*}
% \caption{\textbf{Evaluation Results with Heterophlic Datasets under Supervised Inductive Setting (measured by ROC-AUC except accuracy for amazon-ratings: \%).}}
% \label{tab:main-het}
\centering
\begin{sc}
\begin{small}
\begin{tabular}{lcccc} 
\hline
           & questions  & \begin{tabular}[c]{@{}c@{}}amz \\-raings\end{tabular} & tolokers   & \begin{tabular}[c]{@{}c@{}}mine\\sweeper\end{tabular}  \\ 
\hline
GCN        & 76.09$_{\color{gray}\pm1.27}$ & 48.70$_{\color{gray}\pm0.63}$     & 83.64$_{\color{gray}\pm0.67}$ & 89.75$_{\color{gray}\pm0.52}$   \\
GAT        & 77.43$_{\color{gray}\pm1.20}$ & 49.09$_{\color{gray}\pm0.63}$     & 83.70$_{\color{gray}\pm0.47}$ & 92.01$_{\color{gray}\pm0.68}$   \\
GCNII        & 78.62$_{\color{gray}\pm0.21}$ & 48.44$_{\color{gray}\pm0.15}$     & 82.68$_{\color{gray}\pm1.24}$ & 92.94$_{\color{gray}\pm1.12}$   \\
GPRGNN     & 55.48$_{\color{gray}\pm0.91}$ & 44.88$_{\color{gray}\pm0.34}$     & 72.94$_{\color{gray}\pm0.97}$ & 86.24$_{\color{gray}\pm0.61}$   \\
H$_2$GCN      & 63.59$_{\color{gray}\pm1.46}$ & 36.47$_{\color{gray}\pm0.23}$     & 73.35$_{\color{gray}\pm1.01}$ & 89.71$_{\color{gray}\pm0.31}$   \\
FAGCN      & 77.24$_{\color{gray}\pm1.26}$ & 44.12$_{\color{gray}\pm0.30}$     & 77.75$_{\color{gray}\pm1.05}$ & 88.17$_{\color{gray}\pm0.73}$   \\
GloGNN     & 65.74$_{\color{gray}\pm1.19}$ & 36.89$_{\color{gray}\pm0.14}$     & 73.39$_{\color{gray}\pm1.17}$ & 51.08$_{\color{gray}\pm1.23}$   \\
Graphormer & OOM        & OOM            & OOM        & OOM          \\
GraphGPS   & OOM        & OOM            & 84.70$_{\color{gray}\pm0.56}$ & 92.29$_{\color{gray}\pm0.61}$   \\
Exphormer  & 73.04$_{\color{gray}\pm0.58}$ & 49.37$_{\color{gray}\pm0.36}$     & 84.20$_{\color{gray}\pm0.22}$ & 90.42$_{\color{gray}\pm0.10}$   \\
$\mathbf{N}^2$         & 77.86$_{\color{gray}\pm0.56}$ & 49.70$_{\color{gray}\pm0.49}$     & 84.85$_{\color{gray}\pm0.53}$ & 93.97$_{\color{gray}\pm0.27}$   \\ 
\hline
\textbf{AdaR (Ours)}       & \textbf{79.27$_{\color{gray}\pm0.51}$} & \textbf{49.98$_{\color{gray}\pm0.12}$}     & \textbf{84.92$_{\color{gray}\pm0.65}$} & \textbf{94.08$_{\color{gray}\pm0.18}$}   \\
\hline
\end{tabular}
\end{small}
\end{sc}
% \end{table*}
}
    }%
    \subtable[Homophilic Graphs\label{tab:hom-node}]{
        \centering
        \resizebox{0.5\textwidth}{!}{
        % \begin{table*}
% \caption{\textbf{Evaluation Results with Homophilic Datasets under Supervised Inductive Setting (measured by accuracy: \%).}}
% \label{tab:main-hom}
\centering
\begin{sc}
\begin{small}
\begin{tabular}{lcccc} 
\hline
           & \begin{tabular}[c]{@{}c@{}}Coauthor\\CS\end{tabular} & \begin{tabular}[c]{@{}c@{}}Coauthor\\Physics\end{tabular} & \begin{tabular}[c]{@{}c@{}}Amz\\Photo\end{tabular} & \begin{tabular}[c]{@{}c@{}}Amz\\Computers\end{tabular}  \\ 
\hline
GCN        & 92.92$_{\color{gray}\pm0.12}$ & 96.18$_{\color{gray}\pm0.07}$      & 92.70$_{\color{gray}\pm0.20}$  & 89.65$_{\color{gray}\pm0.52}$       \\
GAT        & 93.61$_{\color{gray}\pm0.14}$ & 96.17$_{\color{gray}\pm0.08}$      & 93.87$_{\color{gray}\pm0.11}$  & 90.78$_{\color{gray}\pm0.17}$       \\
GCNII        & 95.03$_{\color{gray}\pm0.59}$ & 96.96$_{\color{gray}\pm0.13}$      & 94.84$_{\color{gray}\pm0.16}$  & 92.35$_{\color{gray}\pm0.09}$       \\
GPRGNN     & 95.13$_{\color{gray}\pm0.09}$ & 96.85$_{\color{gray}\pm0.08}$      & 94.49$_{\color{gray}\pm0.14}$  & 89.32$_{\color{gray}\pm0.29}$       \\
APPNP      & 94.49$_{\color{gray}\pm0.07}$ & 96.54$_{\color{gray}\pm0.07}$      & 94.32$_{\color{gray}\pm0.14}$  & 90.18$_{\color{gray}\pm0.17}$       \\
NAGphormer & 95.75$_{\color{gray}\pm0.09}$ & 97.34$_{\color{gray}\pm0.03}$      & 95.49$_{\color{gray}\pm0.11}$  & 91.22$_{\color{gray}\pm0.14}$       \\
SAN        & 94.51$_{\color{gray}\pm0.15}$ & OOM             & 94.86$_{\color{gray}\pm0.10}$  & 89.83$_{\color{gray}\pm0.16}$       \\
Graphormer & OOM        & OOM             & 92.74$_{\color{gray}\pm0.14}$  & OOM              \\
GraphGPS   & 93.93$_{\color{gray}\pm0.12}$ & OOM             & 95.06$_{\color{gray}\pm0.13}$  & OOM              \\
Exphormer  & 95.77$_{\color{gray}\pm0.15}$ & 97.16$_{\color{gray}\pm0.13}$      & 95.27$_{\color{gray}\pm0.42}$  & 91.59$_{\color{gray}\pm0.31}$       \\
$\mathbf{N}^2$     & 94.44$_{\color{gray}\pm0.45}$ & 97.56$_{\color{gray}\pm0.28}$      & 95.75$_{\color{gray}\pm0.34}$  & 92.51$_{\color{gray}\pm0.13}$       \\ 
\hline
\textbf{AdaR (Ours)}      & \textbf{95.93$_{\color{gray}\pm0.23}$} & \textbf{97.89$_{\color{gray}\pm0.11}$}      & \textbf{95.85$_{\color{gray}\pm0.25}$}  & \textbf{93.38$_{\color{gray}\pm0.17}$}       \\
\hline
\end{tabular}
\end{small}
\end{sc}
% \end{table*}
}
    }
\end{table*}
\begin{figure*}[htb]
    \centering
    \subfigure[w/o. $\mathbf{S}$]{\label{fig:step-info-step}
      \begin{minipage}[t]{0.3\linewidth}
          \centering
          \includegraphics[width=\linewidth]{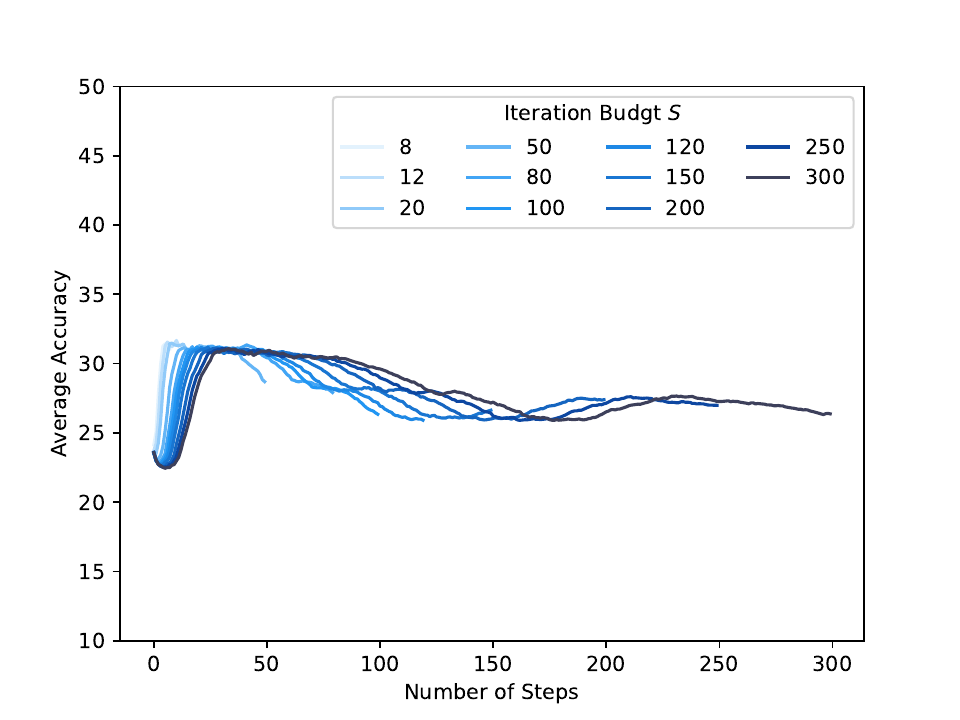}
      \end{minipage}}%
    \hfill
    \subfigure[$\mathbf{S}$ ($\Delta t=1$)]{\label{fig:step-info-absstep}
          \begin{minipage}[t]{0.3\linewidth}
          \centering
          \includegraphics[width=\linewidth]{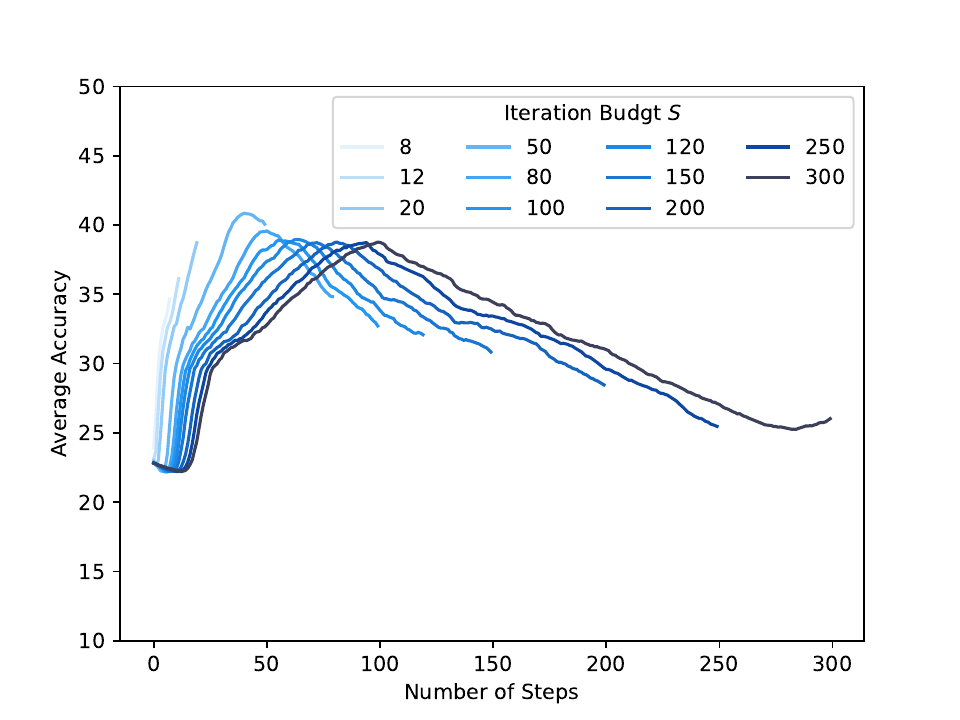}
          \end{minipage}}%
    \hfill
    \subfigure[w/o. $\mathbf{R}$]{\label{fig:step-info-rel}
          \begin{minipage}[t]{0.3\linewidth}
          \centering
          \includegraphics[width=\textwidth]{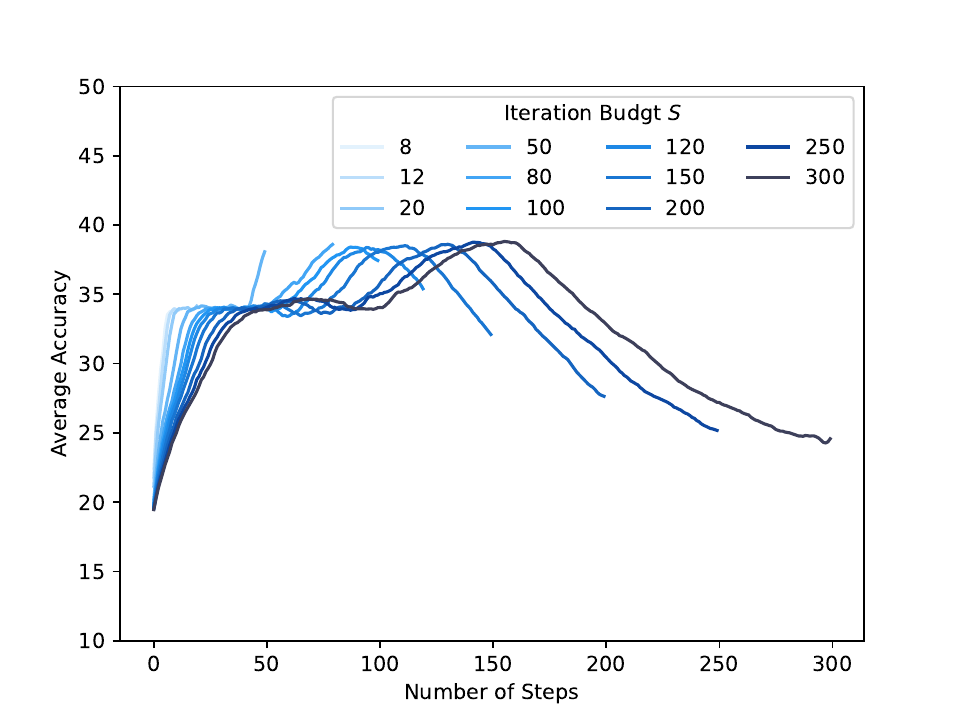}
          \end{minipage}}
    \subfigure[\texttt{AdaR}]{\label{fig:step-adar}
          \begin{minipage}[t]{0.3\linewidth}
          \centering
          \vspace{-0.1in}
          \includegraphics[width=\linewidth]{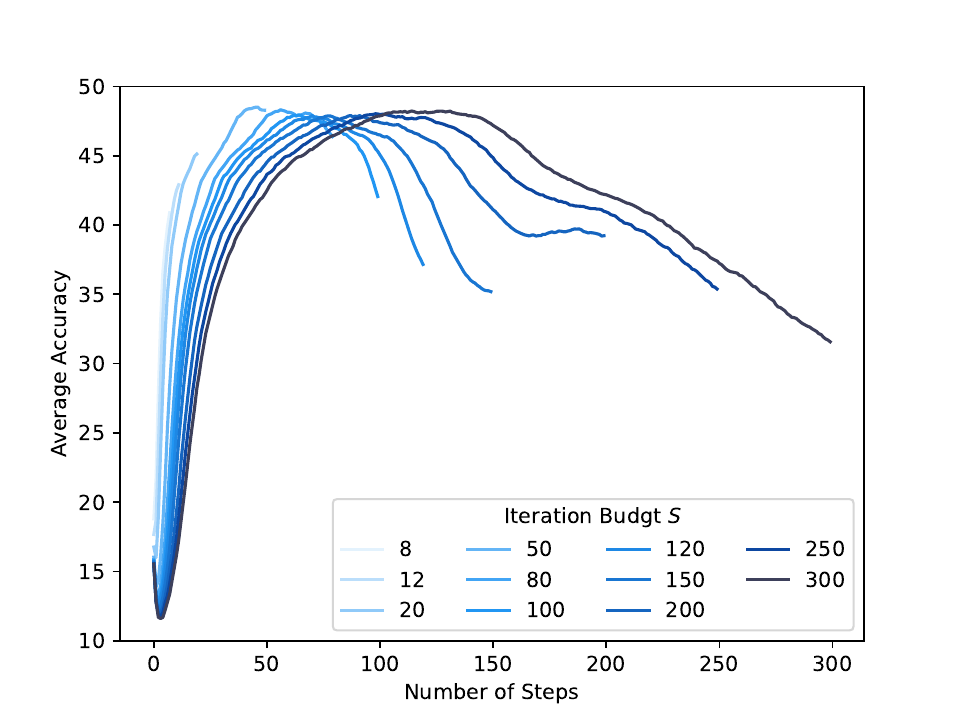}
          \end{minipage}}%
    \hfill
    \subfigure[w/o. $\mathcal{L}_\texttt{full}$]{\label{fig:step-loss-full}
          \begin{minipage}[t]{0.3\linewidth}
          \centering
          \vspace{-0.1in}
          \includegraphics[width=\linewidth]{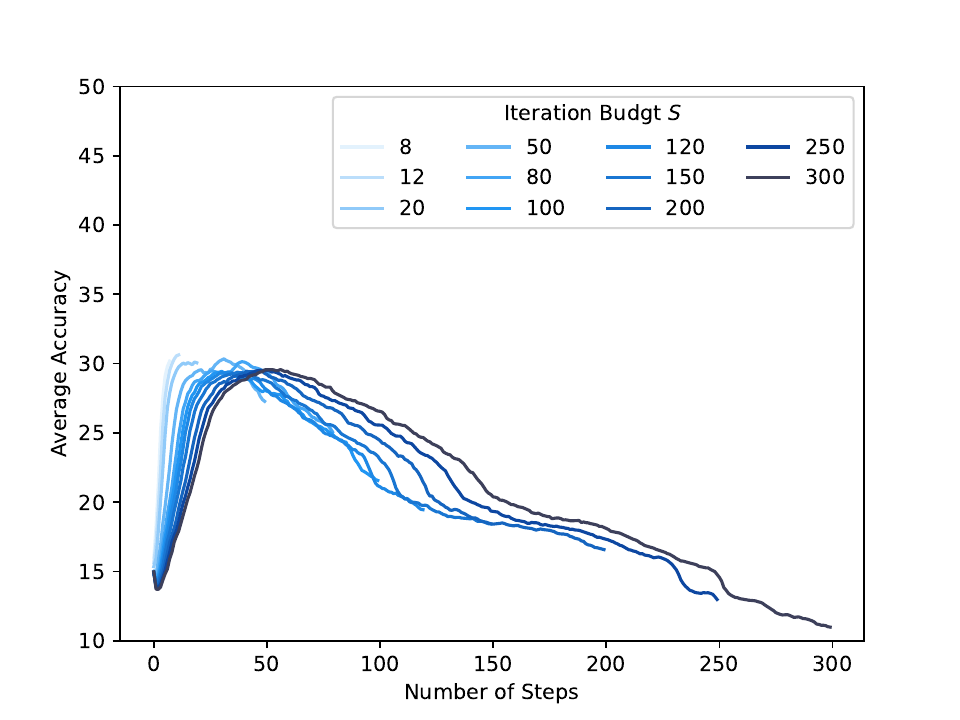}
          \end{minipage}}%
    \hfill
    \subfigure[w/o. $\mathcal{L}_\texttt{step}$]{\label{fig:step-loss-step}
          \begin{minipage}[t]{0.3\linewidth}
          \centering
          \vspace{-0.1in}
          \includegraphics[width=\linewidth]{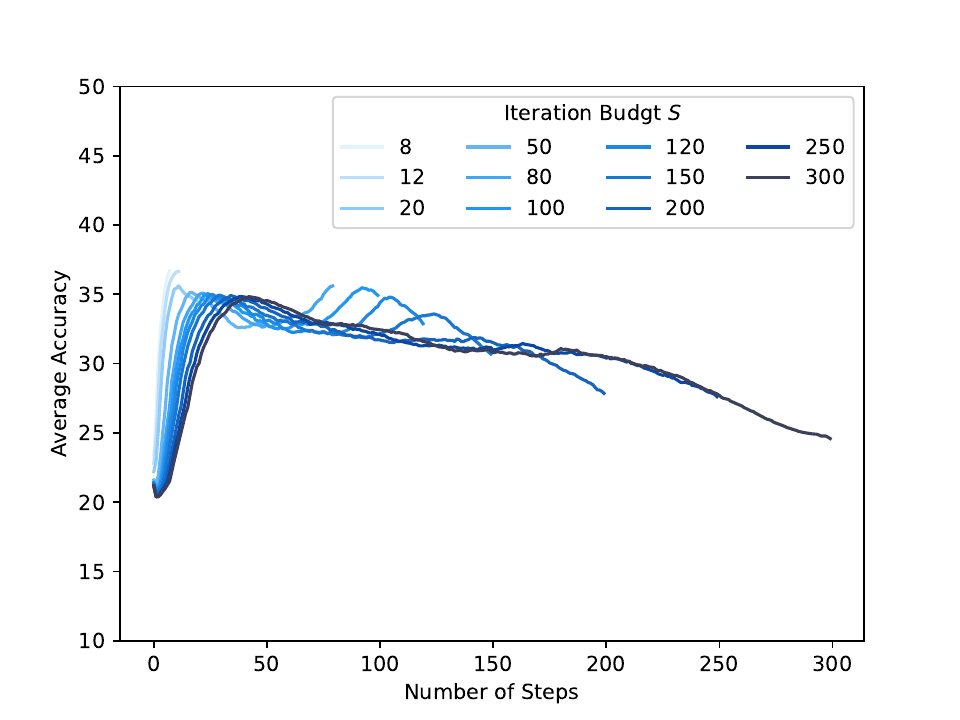}
          \end{minipage}}
    % \vspace{-0.1in}
    \caption{\textbf{Test-time Computing under Different Design Choices.} (a)-(c) studies the effectiveness of different encoded information, where w/o. $\mathbf{S}$ denotes no access to step information, $\mathbf{S} (\Delta t=1)$ denotes encoding step information with a fixed unit step, w/o. $\mathbf{R}$ denotes not employing relations with targets. (d) presents the results of our AdaR. (e)-(f) studies the effectiveness of the supervision signals, where w/o. $\mathcal{L}_\texttt{full}$/$\mathcal{L}_\texttt{step}$ denotes not employing the corresponding optimization target.}
    \label{fig:step}
\end{figure*}
Four types of baseline models are incoporated, including GNN backbones developed for the supervised setting, self-supervised methods, an LLM, and graph foundation models.  
Please refer to Appendix~\ref{ssec:app-setup} for more details. 
The best results among different backbones are reported for each self-supervised methods (Full results at Appendix~\ref{sssec:full-zero-shot}). 
The inference iteration steps are selected based on the node-target relation, with details reported in Section~\ref{ssec:analysis}.

\paragraph{Performance.}
The comparison results between AdaR and baseline models are presented in Tab.~\ref{tab:main-zero}. We can see that AdaR consistently achieves better performance across downstream tasks, which are collected from different domains and encompass both heterophilic and homophilic graphs. Compared to traditional GNN models, which do not follow the recurrent paradigm, AdaR achieves an average absolute improvement of 15.89\%. Compared to $\mathbf{N}^2$, which does not depend on step information and representation-target relations, AdaR achieves an average absolute improvement of 13.52\%. In comparison with zero-shot graph foundation models, our flexible architecture empowers AdaR to employ a large iteration budget, revising the representation delicately without changing model parameters. As a result, AdaR achieves an average improvement of 21.99\% over the compared graph foundation models. AdaR also requires fewer or comparable parameters to other baselines. All these results demonstrate the effectiveness of AdaR in tackling zero-shot inductive tasks, where enabling adaptive recurrent process and adjusting the number of iteration steps gives rise to better adaptability.

\subsection{Supervised Transductive Transfer}
\paragraph{Experimental Setup.} Both AdaR and baselines are trained and evaluated on each dataset separately, including heterophilic~\cite{platonov_CriticalLookEvaluation_2023} and homophilic graphs~\cite{shchur_PitfallsGraphNeural_2019}. 
% Statistics of these datasets are summarized in Tab.~\ref{tab:statistics}.
Please refer to Appendix~\ref{ssec:app-setup} for more details. The best recurrent step of AdaR is selected based on the validation split.

\paragraph{Performance.} Tab.~\ref{tab:main-trans} shows that AdaR with flexible architecture can also surpass baselines under the supervised setting, achieving consistent superior performance on both heterophilic and homophilic graphs. This indicates that even on the same graph, the nodes may require different sizes of receptive fields. Moreover, among AdaR, $\mathbf{N}^2$, and graph transformers that adopt the global message-passing process, AdaR achieves an average absolute improvement of 1.33\% than other models. This demonstrates that even with a global receptive field, the model still requires a flexible architecture to adjust the number of layers adaptively.

\subsection{Model Analysis}\label{ssec:analysis}

\paragraph{Effectiveness of the Encoded Information.}
Based on our theoretical analysis, a recurrent model should explicitly depend on step information $\mathbf{S}$ at each iteration. To enable flexible inference under varying iteration budgets, AdaR adopts \emph{relative} step encoding by fixing the total time $T=1$, rather than using \emph{absolute} time encoding that fixes the unit time $\Delta t=1$. Moreover, to adapt to diverse inputs and targets, AdaR also incorporates the relations between the current node and target representations, denoted as $\mathbf{R}$. We conduct an ablation study to examine the effect of these design choices, as shown in Fig.~\ref{fig:step-info-step}-\ref{fig:step-info-rel}, and compare them with the full AdaR model in Fig.~\ref{fig:step-adar}. Models are pre-trained on amazon-ratings, PubMed, arXiv, and BookHistory with an iteration budget of $S=8$, and evaluated on arxiv-year, WikiCS, SportsFit, Cora, CiteSeer, and DBLP using iteration budgets in $\{8,12,20,50,80,100,120,150,200,250,300\}$. Results are averaged across all evaluation datasets. As shown in Fig.~\ref{fig:step-info-step}, removing step information $\mathbf{S}$ leads to the most severe performance degradation, as the model fails to approximate the target trajectory based on Theorem~\ref{thrm:non-uni-conv}. Furthermore, compared to fixing the total time in AdaR, fixing the unit time $\Delta t=1$ fails to generalize to unseen iteration budgets, where increasing the number of iterations results in deteriorating performance (Fig.~\ref{fig:step-info-absstep}). Regarding the usage of relation information, AdaR, which encodes relations between nodes and targets, achieves better performance compared to the variant with no access to relations (Fig.~\ref{fig:step-info-rel}). These results collectively demonstrate the necessity of incorporating both normalized step information and target-aware relations for flexible recurrent inference.

\paragraph{Effectiveness of the Supervision Signals.}
To provide supervision at each recurrent step, AdaR introduces the gradient of the task loss with respect to the intermediate representations $\mathbf{H}^{(t_s)}$ as supervision signals, and matches it with the predicted updates of $\mathbf{H}^{(t_s)}$. Specifically, $\mathcal{L}_\texttt{full}$ enforces the overall recurrent update to align with the gradient computed at the initial representation, while $\mathcal{L}_\texttt{step}$ supervises the update at each individual step. We conduct ablation studies on these two optimization objectives in Fig.~\ref{fig:step-loss-full} and Fig.~\ref{fig:step-loss-step}, following the same experimental setup as the step information ablation. As shown in the results, removing either $\mathcal{L}_\texttt{full}$ or $\mathcal{L}_\texttt{step}$ leads to performance degradation. Notably, removing $\mathcal{L}_\texttt{full}$ results in a more severe drop in performance (Fig.~\ref{fig:step-loss-full}). This is because $\mathcal{L}_\texttt{full}$ enforces global consistency of the recurrent updates across the entire process, whereas relying solely on $\mathcal{L}_\texttt{step}$ may cause local approximation errors to accumulate over iterations, ultimately harming the final representations. Regarding the ablation on $\mathcal{L}_\texttt{step}$, losing supervision at each recurrent step results in a more fluctuating inference process, where the curves in Fig.~\ref{fig:step-loss-step} exhibit multiple local maxima, indicating oscillatory updates across recurrent steps.

\paragraph{Iteration Exiting Strategy}\label{sssec:exit}
During empirical analysis, we observe that the model performance exhibits a unimodal pattern during the recurrent process, reaching a single maximum at an intermediate step (Fig.~\ref{fig:step-adar}). This suggests that the predicted label distributions of nodes become progressively more aligned with the ground-truth distribution at early iterations, while later iterations lead to degraded or over-smoothed predictions. 
To characterize this behavior from a representation perspective, we analyze the average inner-product relation between node and target representations $\sum_{i,j}(\textbf{H}^{(t_s)}\mathbf{C}^{(t_s)\top})_{i,j}$ at each recurrent step. Results in Fig.~\ref{fig:exit} show that the average value of the inner-product matrix also follows a unimodal pattern, whose maximum closely coincides with the peak model performance. This observation indicates that the average inner-product can serve as an effective surrogate for measuring the alignment between node representations and target representations, and thus provides a practical criterion for iteration exiting.

\begin{figure}[htb]
    \centering
    \subfigure[arxiv-year]{\label{fig:exit-arxiv}
      \begin{minipage}[t]{0.48\linewidth}
          \centering
          \includegraphics[width=\linewidth]{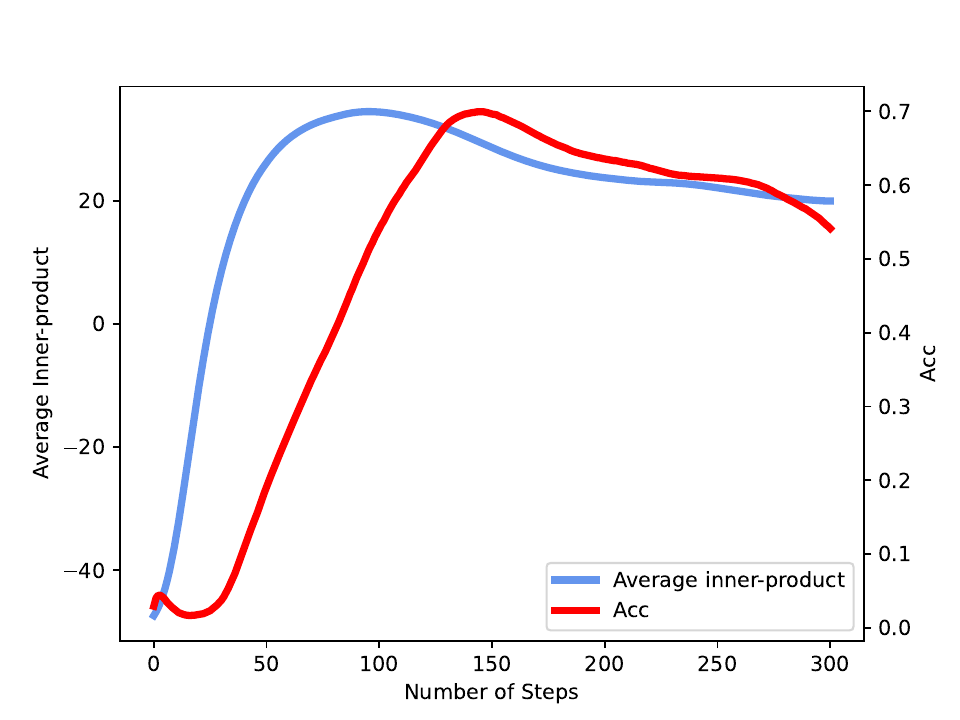}
      \end{minipage}}%
    \hfill
    \subfigure[CiteSeer]{\label{fig:exit-citeseer}
          \begin{minipage}[t]{0.48\linewidth}
          \centering
          \includegraphics[width=\linewidth]{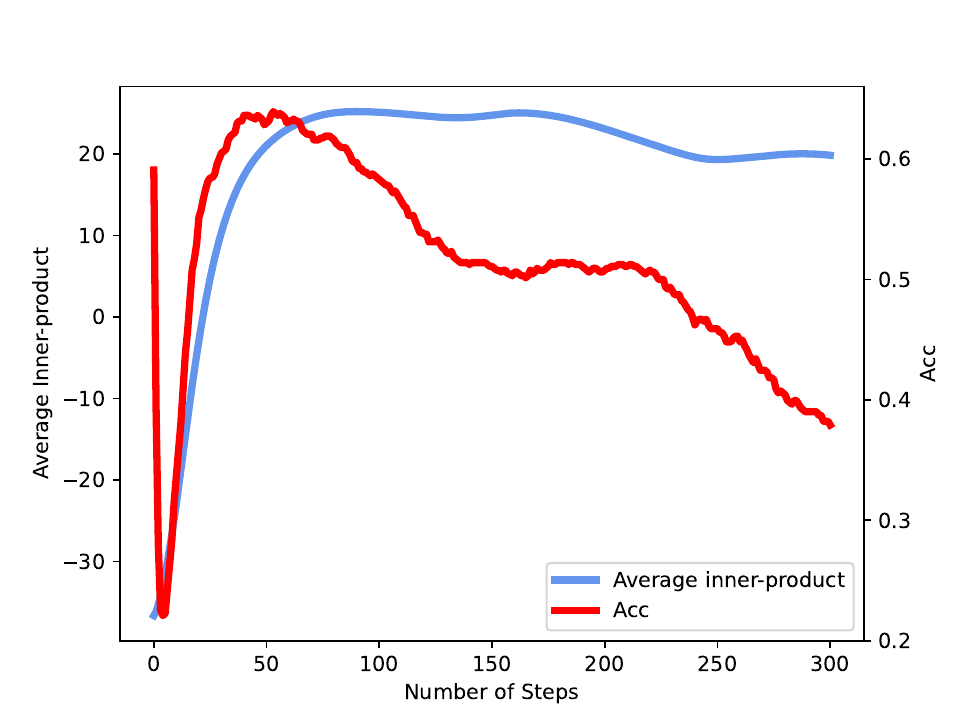}
          \end{minipage}}%
    % \hfill
    % \subfigure[DBLP]{\label{fig:exit-dblp}
    %       \begin{minipage}[t]{0.42\linewidth}
    %       \centering
    %       \includegraphics[width=\textwidth]{fig/exit-DBLP.pdf}
    %       \end{minipage}}
    % \vspace{-0.1in}
    \caption{\textbf{Iteration Exiting Strategy.}}
    \label{fig:exit}
\end{figure}
\begin{figure}[htb]
\centering
\includegraphics[width=0.8\linewidth]{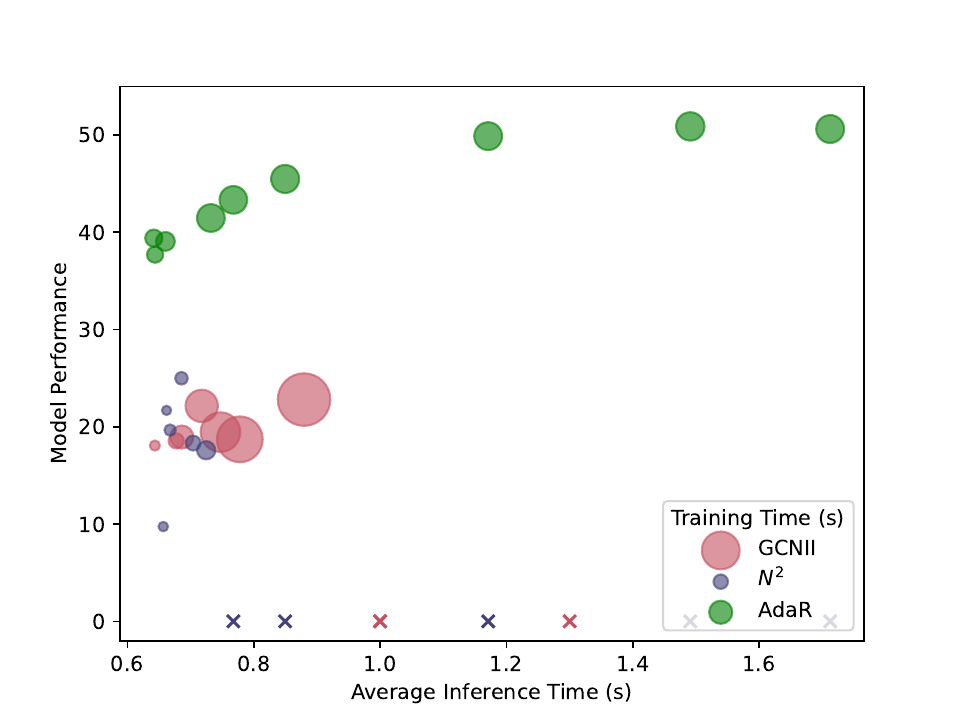}
% \vspace{-0.1in}
\caption{\textbf{Accurcay-Computing Curve.} Bubble size denotes training time. `x' denotes out-of-memory configurations.}\label{fig:acc-compute}
% \vspace{-0.1in}
\end{figure}
\paragraph{Effectiveness under Constrained Resources.}
Different downstream tasks require various optimal receptive fields. To satisfy such requirements, graph nodes must be able to reach nodes at any distance via message passing. Previous works, such as GCNII~\cite{chen_SimpleDeepGraph_2020}, directly achieve this by implementing extremely deep graph models via residual mechanisms. However, this paradigm requires the same architectural configuration during training and inference, leading to longer training time and the out-of-memory issue. In contrast, by employing normalized step information, AdaR can be pre-trained with a small number of iteration budget and make inferences with significantly larger budgets. This empowers AdaR to achieve better performance with less training consumption. In practice, an ideal training budget is around $8$, while the inference budget is around $300$. To evaluate the effectiveness of our recurrent paradigm under constrained training resources, AdaR is compared with GCNII and $\mathbf{N}^2$ regarding model performance and computing efficiency. Grid search is performed on GCNII and $\mathbf{N}^2$ with the number of layers/iterations in $\{8, 20, 50, 100, 150, 200, 250, 300\}$.
% , and dropout probability in $\{0, 0.1, 0.3, 0.5\}$. 
Results in Fig.~\ref{fig:acc-compute} are averaged across different evaluation datasets, where AdaR demonstrates better performance with similar inference computing time and much less training time.

\begin{table}
\caption{\textbf{Cost Comparison.}}
\label{tab:cost}
\centering
\begin{sc}
\begin{small}
\resizebox{0.85\linewidth}{!}{
\setlength{\tabcolsep}{3pt}
\begin{tabular}{clrrrr} 
\hline
    &       & AdaR      & $\mathbf{N}^2$       & LLaGA     & UniGTE     \\ 
\hline
\multirow{2}{*}{\begin{tabular}[c]{@{}c@{}}Time~\\(s)\end{tabular}}  & Train ($\times 10^3$) & 1.59 & 0.54   & 80.11 & 260.44  \\
    & Test  & 4.59      & 0.74     & 2,170.34  & 2093.72    \\
\multirow{2}{*}{\begin{tabular}[c]{@{}c@{}}Mem.~\\(GB)\end{tabular}} & Train & 9.87 & 5.94 & 29.82 & 30.74  \\
                & Test  & 0.57    & 0.49   & 14.04 & 27.25  \\
\hline
\end{tabular}}
% \vspace{-0.1in}
\end{small}
\end{sc}
\end{table}
\paragraph{Cost Analysis.}
To demonstrate the absolute computing efficiency, AdaR is compared with $\mathbf{N}^2$ and LLM-based methods LLaGA and UniGTE. The number of steps for $\mathbf{N}^2$ is fixed as $8$ during both training and inference. Compared to other foundation models, AdaR eliminates the dependence on LLMs to achieve adaptation to various downstream datasets, and therefore demands less memory overhead (Tab.~\ref{tab:cost}). Notably, LLM-based methods transform the input graphs into subgraph sequences and process sequence mini-batches sequentially. As a result, these methods require significantly longer training and test time. This demonstrates the efficiency of AdaR in achieving promising zero-shot transfer performance. Please refer to Appendix~\ref{sssec:app-time} for further time complexity analysis.

\section{Conclusion}
In this paper, we investigated how to reconcile the mismatch between the various receptive field requirements of graph data and the rigid design of pre-defined architectures. To address this problem, we proposed an adaptive recurrent graph model, termed AdaR, which provides a flexible architecture for adaptive test-time computing on graphs, tackling varying receptive field requirements without altering the model architecture. AdaR explicitly encodes normalized step information and representation–target relations into the recurrent updates, enabling adaptive recurrent process on different datasets. Gradient-based supervision signals are designed to guide representation updates throughout the recurrence in AdaR. Empirical results demonstrated that AdaR can be trained with a small iteration budget while being evaluated with substantially larger budgets, consistently outperforming baselines on both inductive and transductive tasks. Limitation discussion is provided in Appendix~\ref{sec:app-limitation}.

% Acknowledgements should only appear in the accepted version.
\section*{Acknowledgement}
This work was supported in part by the National Key R\&D Program of China under Grant 2023YFC2508704,  in part by the National Natural Science Foundation of China under grant number 62236008, in part by the Natural Science Foundation of Beijing under grant number L251082, and in part by Shandong Provincial Natural Science Foundation under project ZR2025ZD01. The authors would like to thank the anonymous reviewers for their helpful comments and suggestions that improved this manuscript.

% If a paper is accepted, the final camera-ready version can (and usually should) include acknowledgements.  Such acknowledgements should be placed at the end of the section, in an unnumbered section that does not count towards the paper page limit. Typically, this will include thanks to reviewers who gave useful comments, to colleagues who contributed to the ideas, and to funding agencies and corporate sponsors that provided financial support.

\section*{Impact Statement}
This paper presents work whose goal is to advance the field of Machine Learning. There are many potential societal consequences of our work, none of which we feel must be specifically highlighted here.

\bibliography{example_paper}
\bibliographystyle{icml2026}

%%%%%%%%%%%%%%%%%%%%%%%%%%%%%%%%%%%%%%%%%%%%%%%%%%%%%%%%%%%%%%%%%%%%%%%%%%%%%%%
%%%%%%%%%%%%%%%%%%%%%%%%%%%%%%%%%%%%%%%%%%%%%%%%%%%%%%%%%%%%%%%%%%%%%%%%%%%%%%%
% APPENDIX
%%%%%%%%%%%%%%%%%%%%%%%%%%%%%%%%%%%%%%%%%%%%%%%%%%%%%%%%%%%%%%%%%%%%%%%%%%%%%%%
%%%%%%%%%%%%%%%%%%%%%%%%%%%%%%%%%%%%%%%%%%%%%%%%%%%%%%%%%%%%%%%%%%%%%%%%%%%%%%%
\newpage
\appendix
\onecolumn
\renewcommand{\thefigure}{A\arabic{figure}}
\renewcommand{\thetable}{A\arabic{table}}
\renewcommand{\theequation}{A\arabic{equation}}
\section{Proof}\label{sec:app-proof}
% Lemma 1, 无step，无法满足一致收敛
% Assumption, 有step，假设误差上界，可通过UAT保证
% Lemma 2, 无step，无法满足假设
% Theorem 1, 有step，可满足一致收敛性质
\subsection{Lemma~\ref{thrm:non-uni-conv}}
\textbf{Lemma~\ref{thrm:non-uni-conv}} (Non-uniform Convergence without Step Information)
{\it Let $\mathbf{f}_\Theta(\mathbf{H}^{(t_{s-1})},\mathcal{C})$ be a recurrent layer that generates a discrete trajectory $\{{\mathbf{H}^{(t_s)}}\}^{S}_{s=0}$ without the step information $t_s=s\Delta t$. There exist a target trajectory $\{\mathbf{H}^{^{*}(t_s)}\}^{S}_{s=0}$ generated by a continuously differentiable Lipschitz function and a constant $\varepsilon>0$ such that, for sufficiently large $S$, $\sup_{0\le s\le S}\|\mathbf{H}^{(t_s)}-\mathbf{H}^{^{*}{(t_s)}}\|_\mathtt{F}\ge\varepsilon$.}

\begin{proof}
Let the target trajectory $\{{\mathbf{H}^{^*(t_s)}}\}^{S}_{s=0}$ be governed by the ordinary differential equation $\frac{d}{dt_s}\mathbf{H}^{^{*}{(t_s)}}=\mathtt{v}(\mathbf{H}^{^{*}{(t_s)}},\mathcal{C},t_s)$. Applying a first-order Taylor expansion of $\mathbf{H}^{^*(t_s)}$ at $t_0$, we obtain
\begin{equation*}
\mathbf{H}^{^{*}{(t_s)}}=\mathbf{H}^{^{*}{(t_0)}} + \mathtt{v}(\mathbf{H}^{^{*}{(t_0)}},\mathcal{C},t_0)(t_s-t_0)+O(t_s^2).
\end{equation*}
In particular, for $s=1$, $\mathbf{H}^{^{*}{(t_1)}}=\mathbf{H}^{^{*}{(t_0)}} + \mathtt{v}(\mathbf{H}^{^{*}{(t_0)}},\mathcal{C},t_0)\Delta t+O(\Delta t^2)$.

Let the recurrent layer produce $\mathbf{H}^{(t_1)}=\mathbf{H}^{(t_0)}+\delta$, where $\delta$ is independent of $\Delta t$ since the layer does not observe the step information $t_s=s\Delta t$. Assume, for contradiction, that the recurrent layer achieves uniform convergence without the step information.  Then there exists a constant $C>0$, such that for any target trajectory  $\{{\mathbf{H}^{^*(t_s)}}\}^{S}_{s=0}$, 
\begin{equation*}
\|\mathbf{H}^{(t_s)}-\mathbf{H}^{^{*}{(t_s)}}\|_\mathtt{F}\le C\Delta t,\quad s\in\{0,\cdots,S\}. 
\end{equation*}
For $s=1$, this implies
\begin{equation*}
\begin{aligned}
C\Delta t
&\ge\|\mathbf{H}^{(t_1)}-\mathbf{H}^{^{*}{(t_1)}}\|\\
&=\|\delta-\mathtt{v}(\mathbf{H}^{^{*}{(t_0)}},\mathcal{C},t_0)\Delta t-O(\Delta t^2)\|\\
C\Delta t+\|O(\Delta t^2)\|&\ge\|\delta-\mathtt{v}(\mathbf{H}^{^{*}{(t_0)}},\mathcal{C},t_0)\Delta t-O(\Delta t^2)\|+\|O(\Delta t^2)\|.
\end{aligned}
\end{equation*}
By the triangle inequality, there exists a constant $K_1>0$ such that
\begin{equation*}
\begin{aligned}
\|\delta-\mathtt{v}(\mathbf{H}^{^{*}{(t_0)}},\mathcal{C},t_0)\Delta t\| &\le C\Delta t+K_1\Delta t^2\\
\|\frac{\delta}{\Delta t}-\mathtt{v}(\mathbf{H}^{^{*}{(t_0)}},\mathcal{C},t_0)\|&\le C+K_1\Delta t.
\end{aligned}
\end{equation*}
Let $S\rightarrow\infty$, so that $\Delta t=\frac{T}{S}\rightarrow 0$. Since $\delta$ is independent of $\Delta t$, we have $\frac{\delta}{\Delta t}\rightarrow\infty$ unless $\delta=0$ and $\mathbf{H}^{(t_1)}=\mathbf{H}^{(t_0)}$. 
Hence, a step-independent recurrent layer can achieve uniform convergence only for target trajectories with $\|\mathtt{v}(\mathbf{H}^{^{*}{(t_0)}},\mathcal{C},t_0)\|=0$. This contradicts the assumption. Therefore, for any target trajectory satisfying $\|\mathtt{v}(\mathbf{H}^{^{*}{(t_0)}},\mathcal{C},t_0)\|\ne0$ , there exists $\varepsilon>0$ such that, for sufficiently large $S$, $\sup_{0\le s\le S}\|\mathbf{H}^{(t_s)}-\mathbf{H}^{^{*}{(t_s)}}\|_\mathtt{F}\ge\varepsilon$.
\end{proof}

\subsection{Theorem~\ref{thrm:uni-conv}}
\begin{lemma}[Necessity of Step Information]\label{thrm:necessity}
Let $\mathcal{H}\subset\mathbb{R}^m$ and $\mathcal{A}\subset\mathbb{R}^{n\times n}$ be compact and $\mathtt{v}(\mathbf{H},\mathcal{C},t):\mathcal{H}\times\mathcal{A}\times[0,T]\to\mathbb{R}^d$ be continuous. There exists a family of functions $\{\mathtt{v}_\Theta\}$ that uniformly approximates $\mathtt{v}$ on $\mathcal{H}\times\mathcal{A}\times[0,T]$, if and only if the layer input includes a variable that uniquely determines the step information $t$.
\end{lemma}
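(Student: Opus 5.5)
We prove the two directions separately. Throughout, the family $\{\mathtt{v}_\Theta\}$ is understood as continuous maps (e.g. MLPs) acting on the layer input.

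\emph{Sufficiency.} Suppose the layer input contains a variable $\tau=g(t)$ that uniquely determines $t$. For instance, take the sinusoidal encoding $\mathbf{S}_{s,\cdot}$ of Eq.~\ref{eq:sine}. Its $j=0,1$ coordinates $(\sin\omega_0 t,\cos\omega_0 t)$ with $\omega_0=1$ are injective on $[0,T]$ whenever $T<2\pi$, which covers the choice $T=1$.

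The steps are as follows.
\begin{enumerate}
\item Since $g$ is continuous and injective on the compact interval $[0,T]$, it is a homeomorphism onto its image $g([0,T])$. Hence $\tilde{\mathtt{v}}(\mathbf{H},\mathbf{A},\tau):=\mathtt{v}(\mathbf{H},\mathbf{A},g^{-1}(\tau))$ is continuous on the compact set $\mathcal{H}\times\mathcal{A}\times g([0,T])$.
\item Apply the universal approximation theorem to $\tilde{\mathtt{v}}$ on this compact set. For every $\epsilon>0$ this yields $\mathtt{v}_\Theta$ with $\sup|\mathtt{v}_\Theta(\mathbf{H},\mathbf{A},g(t))-\mathtt{v}(\mathbf{H},\mathbf{A},t)|<\epsilon$, which is the required uniform approximation.
\end{enumerate}
If the encoding only determines $t$ through a general injective variable, continuity of $g^{-1}$ again follows from compactness, so the argument is unchanged.

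\emph{Necessity.} We argue by contraposition. Suppose the layer input contains no variable determining $t$. Then either $\mathtt{v}_\Theta$ does not depend on $t$ at all, or it sees some $\tau=g(t)$ with $g(t_1)=g(t_2)$ for some $t_1\ne t_2$. In both cases there exist $t_1\ne t_2$ such that $\mathtt{v}_\Theta(\mathbf{H},\mathbf{A},\cdot)$ takes the same value at $t_1$ and $t_2$, for every $\Theta$.

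It remains to exhibit a continuous target $\mathtt{v}$ that this family cannot approximate. Choose $\mathtt{v}(\mathbf{H},\mathbf{A},t)=\varphi(t)\mathbf{e}$, where $\mathbf{e}$ is a fixed unit vector and $\varphi$ is continuous with $\varphi(t_1)=0$ and $\varphi(t_2)=1$ (say piecewise linear, or $C^1$ and Lipschitz to match Lemma~\ref{thrm:non-uni-conv}). Then for any $\Theta$ and any fixed $(\mathbf{H},\mathbf{A})$,
\begin{equation*}
1=\|\mathtt{v}(\cdot,t_2)-\mathtt{v}(\cdot,t_1)\|\le\|\mathtt{v}(\cdot,t_2)-\mathtt{v}_\Theta(\cdot,t_2)\|+\|\mathtt{v}_\Theta(\cdot,t_1)-\mathtt{v}(\cdot,t_1)\|,
\end{equation*}
using $\mathtt{v}_\Theta(\cdot,t_1)=\mathtt{v}_\Theta(\cdot,t_2)$. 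Hence the sup error is at least $1/2$ for every member of the family, so uniform approximation fails. This also recovers Lemma~\ref{thrm:non-uni-conv} as the special case in which $g$ is constant.

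\emph{Main obstacle.} The delicate point is making ``uniquely determines $t$'' precise enough for the sufficiency direction. We need the pulled-back target $\tilde{\mathtt{v}}$ to be continuous, which requires $g^{-1}$ to be continuous on $g([0,T])$. Compactness of $[0,T]$ together with continuity of $g$ supplies this. We would state this continuity assumption on the encoding explicitly and verify that the sinusoidal encoding with $T=1$ satisfies it.
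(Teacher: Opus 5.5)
Your proof is correct, but it takes a more general route than the paper in both directions.

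\textbf{Sufficiency.} The paper simply feeds $t$ itself into an MLP and applies the universal approximation theorem on $\mathcal{H}\times\mathcal{A}\times[0,T]$. Your pull-back through a continuous injective encoding $g$ (a homeomorphism onto its compact image) is what is actually needed to justify the sinusoidal $\mathbf{S}$ the model uses. Your check that the $\omega_0=1$ pair is injective for $T=1<2\pi$ closes that gap.

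\textbf{Necessity.} The paper fixes the given $\mathtt{v}$ and assumes it is non-autonomous, a hypothesis not in the lemma statement. It then uses the same two-point triangle inequality to show that no $t$-independent $\mathtt{g}(\mathbf{H},\mathcal{C})$ approximates it arbitrarily well. It never treats inputs carrying partial, non-injective time information, so it really only shows that \emph{some} dependence on $t$ is required.

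You instead take a collision pair $g(t_1)=g(t_2)$ of the fixed encoding and build the adversarial target $\varphi(t)\mathbf{e}$, which every member of the family misses by at least $1/2$. This genuinely covers the ``uniquely determines'' clause. The price is reading the lemma as a statement about approximating all continuous targets rather than one prescribed $\mathtt{v}$. That is in fact the reading under which the ``only if'' is true: a prescribed $\mathtt{v}$ that is autonomous, or that factors through a non-injective $g$, can be approximated without full time information. The paper's version, conversely, yields failure for every non-autonomous target, but only when time is absent from the input altogether.

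\textbf{A remark to drop.} Your argument does not recover Lemma~\ref{thrm:non-uni-conv}. That lemma concerns trajectory error along a single discrete orbit (and, in the paper's proof, the layer's ignorance of $\Delta t$). Failure of sup-norm approximation of the velocity over $\mathcal{H}\times\mathcal{A}\times[0,T]$ does not by itself bound the trajectory error from below. The orbit only samples the field along one curve, where time can be encoded in the state.
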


\begin{proof}~

\textbf{Sufficiency.}~
By the universal approximation theorem, for any $\varepsilon>0$, there exists a parameterized function $\mathtt{v}_\Theta:\mathcal{H}\times\mathcal{A}\times[0,T]\to\mathbb{R}^d$ ({\it e.g.}, a multi-layer perceptron), such that
\begin{equation}
\sup_{(\mathbf{H},\mathcal{C},t)\in\mathcal{H}\times\mathcal{A}\times[0,T]}
\|\mathtt{v}(\mathbf{H},\mathcal{C},t)-\mathtt{v}_\Theta(\mathbf{H},\mathcal{C},t)\|<\varepsilon.
\end{equation}
This establishes sufficiency.

\textbf{Necessity.}~
Since $v:\mathcal{H}\times\mathcal{A}\times[0,T]\to\mathbb{R}^d$ is non-autonomous, there exist $t_1\ne t_2$, $\mathbf{H}$, $\mathcal{C}$, such that $\mathtt{v}(\mathbf{H},\mathcal{C},t_1)\neq \mathtt{v}(\mathbf{H},\mathcal{C},t_2)$. Then the necessity of the lemma can be formulated as that there does not exist a function $\mathtt{g}:\mathcal{H}\times\mathcal{A}\to\mathbb{R}^d$ can uniformly approximate $\mathtt{v}$ on $\mathcal{H}\times\mathcal{A}\times[0,T]$.

Assume, for contradiction, that for arbitrarily small $\varepsilon>0$, there exists a function $\mathtt{g}$ satisfying $\sup_{\mathbf{H},\mathcal{C},t}\|\mathtt{v}(\mathbf{H},\mathcal{C},t)-\mathtt{g}(\mathbf{H},\mathcal{C})\|\le\varepsilon$. In particular, for input $\mathbf{H}$, $\mathcal{C}$, and $t_1\ne t_2$, we have
\begin{equation}
\|\mathtt{v}(\mathbf{H},\mathcal{C},t_1)-\mathtt{g}(\mathbf{H},\mathcal{C})\|\le\varepsilon,\quad
\|\mathtt{v}(\mathbf{H},\mathcal{C},t_2)-\mathtt{g}(\mathbf{H},\mathcal{C})\|\le\varepsilon.
\end{equation}
By the triangle inequality,
\begin{equation}
\|\mathtt{v}(\mathbf{H},\mathcal{C},t_1)-\mathtt{v}(\mathbf{H},\mathcal{C},t_2)\|\le\|\mathtt{v}(\mathbf{H},\mathcal{C},t_1)-\mathtt{g}(\mathbf{H},\mathcal{C})\|+\|\mathtt{v}(\mathbf{H},\mathcal{C},t_2)-\mathtt{g}(\mathbf{H},\mathcal{C})\|\le2\varepsilon.
\end{equation}
Since $\varepsilon$ can be chosen arbitrarily small, this implies $\mathtt{v}(\mathbf{H},\mathcal{C},t_1)=\mathtt{v}(\mathbf{H},\mathcal{C},t_2)$, which contradicts the non-autonomy. Therefore, uniform approximation is impossible without the step information.
\end{proof}

\textbf{Theorem~\ref{thrm:uni-conv}} (Uniform Convergence with Step Information)
{\it For any target trajectory $\{\mathbf{H}^{^{*}(t_s)}\}^{S}_{s=0}$ generated by a continuously differentiable Lipschitz function, there exists a recurrent layer $\mathbf{H}^{(t_{s})}=\mathbf{f}_\Theta(\mathbf{H}^{(t_{s-1})},\mathcal{C},t_{s-1})$ such that ${\mathbf{H}^{(t_s)}}$ converges uniformly to ${\mathbf{H}^{^{*}(t_s)}}$.}

\begin{proof}
Without loss of generality, a recurrent layer accessible to step information can be written in the residual form $\mathbf{f}_\Theta(\mathbf{H}^{(t_{s-1})},\mathcal{C},t_{s-1})=\mathbf{H}^{(t_{s-1})}+\mathtt{v}_\Theta(\mathbf{H}^{(t_{s-1})},\mathcal{C},t_{s-1})\Delta t$. Based on Lemma~\ref{thrm:necessity}, for any $\varepsilon>0$ there exists a mapping function $\mathtt{v}_\Theta$ such that $\sup_{(\mathbf{H},\mathcal{C},t)\in\mathcal{H}\times\mathcal{A}\times[0,T]}
\|\mathtt{v}(\mathbf{H},\mathcal{C},t)-\mathtt{v}_\Theta(\mathbf{H},\mathcal{C},t)\|<\varepsilon$.

Let the target trajectory $\{\mathbf{H}^{^{*}(t_s)}\}^{S}_{s=0}$ satisfy the Lipschitz condition with respect to $\mathbf{H}^{^{*}(t_s)}$, {\it i.e.}, $\|\mathtt{v}(\mathbf{H}_1^{^{*}(t_s)},\mathcal{C},t)-\mathtt{v}(\mathbf{H}_2^{^{*}(t_s)},\mathcal{C},t)\| \le \mathcal{L}_{\mathbf{H}}\|\mathbf{H}_1^{^{*}(t_s)}-\mathbf{H}_2^{^{*}(t_s)}\|$.
By Taylor expansion, $\mathbf{H}^{^{*}(t_{s+1})}=\mathbf{H}^{^{*}(t_s)}+\mathtt{v}(\mathbf{H}^{^{*}(t_s)},\mathcal{C},t_s)\Delta t+O(\Delta t^2)$.
Let $\mathbf{e}_s$ denote the error $\mathbf{H}^{(t_{s})}-\mathbf{H}^{^*(t_{s})}$. Then
\begin{equation}
\begin{aligned}
\mathbf{e}_{s+1}
&=\mathbf{H}^{(t_{s+1})}-\mathbf{H}^{^*(t_{s+1})}\\
&=\mathbf{H}^{(t_{s})}+\mathtt{v}_\Theta(\mathbf{H}^{(t_{s})},\mathcal{C},t_s)\Delta t-\mathbf{H}^{^{*}(t_s)}-\mathtt{v}(\mathbf{H}^{^{*}(t_s)},\mathcal{C},t_s)\Delta t-O(\Delta t^2)\\
&=\mathbf{e}_s+\Delta t\left[\mathtt{v}_\Theta(\mathbf{H}^{(t_{s})},\mathcal{C},t_s)-\mathtt{v}(\mathbf{H}^{^*(t_{s})},\mathcal{C},t_s)\right]-O(\Delta t^2).
\end{aligned}
\end{equation}
Decomposing the difference term yields
\begin{equation}
\begin{aligned}
\mathbf{e}_{s+1}
&=\mathbf{e}_s+\Delta t\left[\mathtt{v}_\Theta(\mathbf{H}^{(t_{s})},\mathcal{C},t_s)-\mathtt{v}(\mathbf{H}^{^*(t_{s})},\mathcal{C},t_s)\right]-O(\Delta t^2)\\
&=\mathbf{e}_s+\Delta t\left[\underset{\texttt{Universal Approximation Error}}{\underbrace{\mathtt{v}_\Theta(\mathbf{H}^{(t_{s})},\mathcal{C},t_s)-\mathtt{v}(\mathbf{H}^{(t_{s})},\mathcal{C},t_s)}}+\underset{\texttt{Lipschitz Condition}}{\underbrace{\mathtt{v}(\mathbf{H}^{(t_{s})},\mathcal{C},t_s)-\mathtt{v}(\mathbf{H}^{^*(t_{s})},\mathcal{C},t_s)}}\right]-O(\Delta t^2)
\end{aligned}
\end{equation}
By uniform approximation, the Lipschitz condition, and the triangle inequality, we obtain
\begin{equation}
\|\mathbf{e}_{s+1}\|
\le(1+\mathcal{L}_\mathbf{H}\Delta t)\|\mathbf{e}_s\|+\varepsilon\Delta t+K_1\Delta t^2,
\end{equation}
for some constant $K_1>0$.

Applying induction and using $\mathbf{e}_0=0$, we have
\begin{equation}
\begin{aligned}
\|\mathbf{e}_{s+1}\|
&\le(1+\mathcal{L}_\mathbf{H}\Delta t)^{s+1}\underset{=0}{\underbrace{\|\mathbf{e}_0\|}}+\sum_{i=0}^{s}(1+\mathcal{L}_\mathbf{H}\Delta t)^{i}(\varepsilon\Delta t+K_1\Delta t^2)\\
&=\frac{(1+\mathcal{L}_\mathbf{H}\Delta t)^{s}-1}{\mathcal{L}_\mathbf{H}\Delta t}(\varepsilon\Delta t+K_1\Delta t^2)\\
&=\frac{\exp[s\ln(1+\mathcal{L}_\mathbf{H}\Delta t)]-1}{\mathcal{L}_\mathbf{H}}(\varepsilon+K_1\Delta t)\\
&\le\frac{\exp[s\mathcal{L}_\mathbf{H}\Delta t]-1}{\mathcal{L}_\mathbf{H}}(\varepsilon+K_1\Delta t)\\
&=\frac{\exp[t_{s}\mathcal{L}_\mathbf{H}]-1}{\mathcal{L}_\mathbf{H}}(\varepsilon+K_1\Delta t).
\end{aligned}
\end{equation}
Since $t_{s}\le T$, it follows that $\sup_s\|e_s\|\le\frac{\exp(T\mathcal{L}_\mathbf{H})-1}{\mathcal{L}_\mathbf{H}}(\varepsilon+K_1\Delta t)$. When $\varepsilon\rightarrow 0$, the error bound decreases to $\frac{\exp(T\mathcal{L}_\mathbf{H})-1}{\mathcal{L}_\mathbf{H}}K_1\Delta t=C\Delta t$, which proves the uniform convergence.
\end{proof}

\section{Pseudo Nodes in Adaptive Graph Models}\label{sec:app-related}
To unify the varying number of targets across different tasks, AdaR adopts pseudo nodes to aggregate information from the target representation and update its own representations accordingly. As a result, these pseudo nodes can be regarded as abstractions of targets, serving as fixed-size proxies for relation computation during the recurrent process. The concept of pseudo nodes has emerged in many adaptive graph models~\cite{liu_OneAllTraining_2023,wang_GFTGraphFoundation_2024,zhao_AllOneOne_2024}. However, AdaR introduces pseudo nodes for a different purpose via a different implementation strategy. Regarding purposes, pseudo nodes in previous works are to readout the learned representations~\cite{liu_OneAllTraining_2023,wang_GFTGraphFoundation_2024} or to enhance the cross-graph communication~\cite{zhao_AllOneOne_2024}, yet AdaR employs pseudo nodes to unify different targets. Regarding implementations, pseudo nodes in AdaR are shared parameters for all graphs. Conversely, pseudo nodes in previous works either cannot be jointly optimized with the overall framework or fail to transfer across tasks~\cite{liu_OneAllTraining_2023,wang_GFTGraphFoundation_2024,zhao_AllOneOne_2024}.

\section{Implementation of $\texttt{MP}$}\label{sec:app-mp}
We implement $\texttt{MP}$ based on the global and local message passing in $\mathbf{N}^2$~\cite{sun_DynamicMessagePassing_2024} with minor modification, including replacing the path integral function with a combination of inner product and $\texttt{softmax}$, replacing the update function with a velocity function in both Eq.~\ref{eq:glob-mp-update} and Eq.~\ref{eq:gnode}.

\subsection{Global Message Passing and Implementation of $\texttt{AGG}$}\label{ssec:app-mp-g}
Given two sets of node elements, such as graph nodes, pseudo nodes, and targets, one set of elements can perform global message passing via the other set of elements. For example, when the number of graph nodes is extremely large, graph nodes can perform global message passing with a small number of pseudo nodes as surrogates to ensure linear complexity. Let $\mathbf{X}_\texttt{in}$ denote the input representation of the global message-passing process, $\mathbf{X}_\texttt{sur}$ denote the surrogate representation, and $\mathbf{X}_\texttt{cond}$ denote a certain condition representation, such as target representations. The global message-passing process at the $s$-th step can be formulated as
\begin{align}
    &\texttt{(Diffuse)}
    &&\mathbf{G} =\psi(\mathbf{X}_\texttt{sur}\mathbf{X}_\texttt{in}^{\top})\mathtt{NL}(\mathbf{X}_\texttt{in}), \label{eq:glob-mp-np}\\
    &\texttt{(Refine)}
    &&\mathbf{\hat{G}} = \psi(\mathbf{X}_\texttt{sur}\mathbf{X}_\texttt{sur}^{\top})\mathbf{G},\label{eq:glob-mp-agg}\\
    &\
    &&\mathbf{X}_\texttt{sur}\leftarrow \mathbf{X}_\texttt{sur} + \mathtt{NL}([\mathbf{\hat{G}}\|\mathbf{1}_{n}\mathbf{S}_{s,\cdot}^\top\|\mathbf{X}_\texttt{sur}\mathbf{X}_\texttt{cond}^{\top})\Delta t,  \label{eq:glob-mp-update}\\
    &\texttt{(Collect)}
    &&\mathbf{M}^\mathtt{glob}\! =\psi(\mathbf{X}_\texttt{in}\mathbf{X}_\texttt{sur}^{\top})\mathtt{NL}(\mathbf{\hat{G}}), \label{eq:glob-mp-pn}
\end{align}
where $\psi$ constructs edge weights for message passing between different node elements, such as graph nodes, pseudo nodes, and targets. In practice, $\psi$ is implemented as a combination of inner product and $\texttt{softmax}$. Eq.~\ref{eq:glob-mp-update} formulates the global feature refinement at the surrogate level, where $\mathtt{NL}([\mathbf{\hat{G}}\|\mathbf{1}_{n}\mathbf{S}_{s,\cdot}^\top\|\mathbf{X}_\texttt{sur}\mathbf{X}_\texttt{cond}^{\top})$ follows the same form in Eq.~\ref{eq:velocity}, considering the current representation, the step information, its relation with certain conditions. Eq.~\ref{eq:glob-mp-pn} formulates the global message-passing outputs, where $\mathbf{M}^\mathtt{glob}$ encodes global information.

This process can be summarized as ``input$\rightarrow$surrogate$\rightarrow$input". For simplicity, we compile Eq.~\ref{eq:glob-mp-np}-\ref{eq:glob-mp-pn} as $\mathbf{M}^{\mathtt{glob}}, \mathbf{X}_\texttt{sur}\leftarrow\mathtt{GlobMP}(\mathbf{X}_\texttt{in}, \mathbf{X}_\texttt{sur},\mathbf{X}_\texttt{cond})$. Eq.~\ref{eq:glob-mp-np} and Eq.~\ref{eq:glob-mp-agg} formluate the aggregation fucntion $\texttt{AGG}(\cdot)$ in Eq.~\ref{eq:proxy-abstract}.

\subsection{Local Message Passing}\label{ssec:app-mp-l}
Topology-coupled message passing is employed to encode the local structures. The resulted local message-passing process for graph node $v$ can be formulated as
\begin{equation}\label{eq:local-mp}
    \mathbf{m}^\mathtt{local}_{v_i} \!=\! \frac{1}{|\mathcal{N}({v_i})|+1}
    \left[ \mathbf{m}_{v_i} \!+\! \!
    \sum_{v'\in\mathcal{N}({v_i})}\mathtt{NL}(\mathbf{m}_{v_j}) \right].
\end{equation}
% where $||$ denotes the concatenation operation, $\mathbf{E}$ denotes the edge features. 
Through local message passing, graph nodes aggregate messages from their adjacent nodes. Eq.~\ref{eq:local-mp} can be compiled as $\mathbf{M}^{\mathtt{local}}=\mathtt{LocalMP}(\mathbf{M}, \mathbf{A})$ for all the graph nodes.

\subsection{Velocity Computing}\label{ssec:app-mp-v}
Based on the formulation from $\mathbf{N}^2$, we now specify the computing process in AdaR. Specifically, given the current representations of graph nodes and targets, AdaR first performs $\texttt{GlobMP}$ to update pseudo-node representations and collect global features. In practice, we also update the target representations recurrently at this stage, depending on their current representation $\mathbf{C}^{(t_{s-1})}$, step information $\mathbf{S}$, and their relation with graph nodes. Therefore, similar to the computation of node-target relations, this process also requires pseudo nodes as surrogates. Let $\mathbf{P}^{(\texttt{p},t_{s})}\in\mathbb{R}^{n_{\texttt{p}}\times d}$ denote the pseudo-node representations, where $\texttt{p}\in\{\texttt{n},\texttt{c}\}$ indicates the associated element type (graph nodes or targets), $n_{\texttt{p}}$ is the number of pseudo nodes, and $\mathbf{P}^{(\texttt{p},0)}$ is a learnable initialization. Given graph node representation $\mathbf{H}^{(t_{s-1})}$, target representation $\mathbf{C}^{(t_{s-1})}$, and pseudo-node representation $\mathbf{P}^{(\texttt{p},t_{s-1})},\texttt{p}\in\{\texttt{c},\texttt{n}\}$ at the $s$-th recursive step, the velocity computing process can be formulated as
\begin{align}
    &\texttt{(Graph-node Abs.)}
    &&\mathbf{M}_\texttt{pn}^{\mathtt{glob}(t_{s})}, \mathbf{P}^{(\texttt{n},t_{s})} \!=\! \mathtt{GlobMP}
    \left(\mathbf{H}^{(t_{s-1})}, \mathbf{P}^{(\texttt{n},t_{s-1})}, \mathbf{P}^{(\texttt{n},t_{s-1})}\right),\label{eq:pnode-graph} \\
    &\texttt{(Target Abs.)}
    &&\texttt{\quad \ -\quad \ }, \mathbf{P}^{(\texttt{c},t_{s})} \!=\! \mathtt{GlobMP}
    \left(\mathbf{C}^{(t_{s-1})}, \mathbf{P}^{(\texttt{c},t_{s-1})}, \mathbf{P}^{(\texttt{c},t_{s-1})}\right),\label{eq:pnode-target} \\
    &\texttt{(Target Rep. Update)}
    &&\mathbf{M}_\texttt{cn}^{\mathtt{glob}(t_{s})}, \mathbf{C}^{(t_{s})} \!=\! \mathtt{GlobMP}
    \left(\mathbf{H}^{(t_{s-1})}, \mathbf{C}^{(t_{s-1})}, \mathbf{P}^{(\texttt{n},t_{s})}\right),\label{eq:cnode} \\
    &\texttt{(Local MP.)}
    &&\mathbf{M}^{\mathtt{local}(t_{s})}=\mathtt{LocalMP}
    \left[
    \left(\mathbf{M}_\texttt{pn}^{\mathtt{glob}(t_{s})} \| \mathbf{M}_\texttt{cn}^{\mathtt{glob}(t_{s})} \| \mathbf{H}^{(t_{s-1})}\right), \mathbf{A}\right], \label{eq:local}\\
    &\texttt{(Node Rep. Update)}
    &&\Delta\mathbf{H}^{(t_{s})}=\texttt{NL}([\mathbf{M}^{\mathtt{local}(t_{s})}\|\mathbf{1}_n\mathbf{S}_{s-1,\cdot}^\top\|\mathbf{H}^{(t_{s-1})}\mathbf{P}^{(\texttt{c},t_{s})\top}]),\label{eq:gnode}
\end{align}
where $\mathbf{M}_\texttt{pn}^{\mathtt{glob}(t_s)}$, $\mathbf{M}_\texttt{cn}^{\mathtt{glob}(t_s)}$ denote the global features of graph nodes, extracted via pseudo nodes and targets, respectively. 
Eq.~\ref{eq:pnode-graph} formulates the graph-node abstraction process with pseudo nodes (\texttt{Graph-node Abs.}). It takes graph-node representation $\mathbf{H}^{(t_{s-1})}$ as inputs $\mathbf{X}_\texttt{in}$, pseudo-node representation $\mathbf{P}^{(\texttt{n},t_{s-1})}$ as the global message-passing surrogate $\mathbf{X}_\texttt{sur}$ and condition $\mathbf{X}_\texttt{cond}$, updating the pseudo-node representations to serve as an abstraction of graph nodes.
Similarly, Eq.~\ref{eq:pnode-target} formulates the target abstraction process with pseudo nodes (\texttt{Target Abs.}). It takes target representation $\mathbf{C}^{(t_{s-1})}$ as inputs $\mathbf{X}_\texttt{sur}$, pseudo-node representation $\mathbf{P}^{(\texttt{c},t_{s-1})}$ as the global message-passing surrogate $\mathbf{X}_\texttt{sur}$ and condition $\mathbf{X}_\texttt{cond}$, updating the pseudo-node representations to serve as an abstraction of targets. Eq.~\ref{eq:cnode} updates the target representations conditioned on the graph node abstraction $\mathbf{P}^{(\texttt{n},t_{s})}$ (\texttt{Target Rep Update}). Eq.~\ref{eq:gnode} follows Eq.~\ref{eq:velocity} to compute the velocity.

Compiling Eq.~\ref{eq:pnode-graph}-\ref{eq:local} gives rise to the message passing function $\texttt{MP}(\cdot)$, while compling Eq.~\ref{eq:pnode-graph}-\ref{eq:gnode} yields the velocity function $\texttt{v}_\Theta(\mathbf{H}^{(t_{s-1})},\mathcal{C},t_{s-1})$.

\begin{figure*}[htb]
    \centering
    \subfigure[Fixed Unit Time Step]{\label{fig:step-unit}
      \begin{minipage}[t]{0.45\linewidth}
          \centering
          \includegraphics[width=\linewidth]{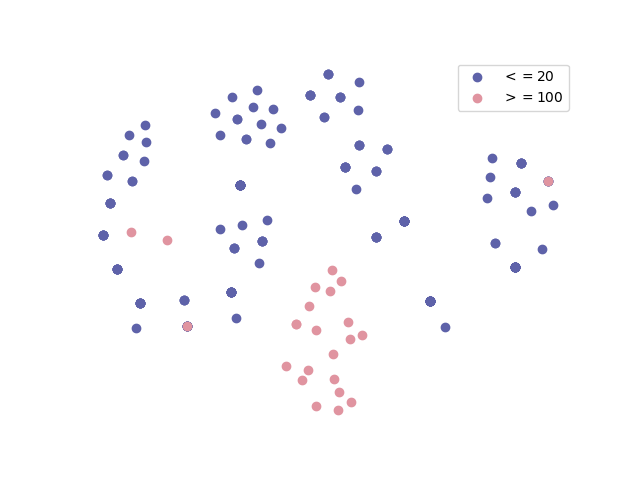}
      \end{minipage}}%
    \hfill
    \subfigure[Fixed Total Time]{\label{fig:step-total}
          \begin{minipage}[t]{0.45\linewidth}
          \centering
          \includegraphics[width=\linewidth]{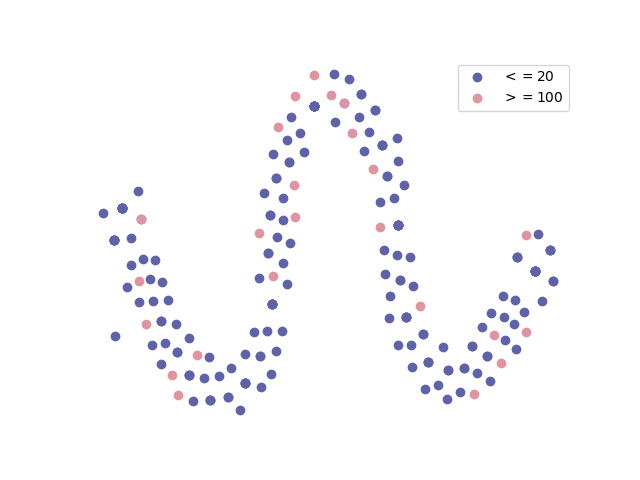}
          \end{minipage}}%
    \caption{\textbf{Comparison with Different Step Information.} `$\le20$' denotes iteration budgets sampled from $[2,20]$, `$\ge100$' denotes iteration budgets sampled from $[100,300]$.}
    \label{fig:step}
\end{figure*}
\section{Analysis on Step Information Encoding}\label{sec:app-relative}
Given the step information $t_s=s\Delta t=sT/S$, two common design choices include fixing the unit time step $\Delta t$ or fixing the total time $T$. To ensure generalizability to different iteration budgets, AdaR fixes the total time ({\it e.g.}, $T=1$). Specifically, different choices of the iteration budget $S$ correspond to different discretizations of the same target trajectory, ensuring that the model is always trained to produce $\mathbf{H}^{(1)}$ as the final representation. To validate this specific implementation choice, we provide further analysis.

\paragraph{Theoretical Analysis.} Based on the implementation of the sinusoidal position encoding function, we have step information embedded as $[\sin(w_k \cdot t_s),\cos(w_k \cdot t_s)]_k$. Let the largest iteration budget during training be $T_0$. Without fixing the total time $T$, inference may involve $T\gg T_0$, leading to phase shifts $w_k\cdot(t_s-T_0)$ proportional to $T$ and large embedding deviations. In contrast, fixing the total time with $T=1$ rescales $t_s$ to $[0,1]$. Therefore, the maximum shift is bounded by the constant $w_k$ independent of $T$, implying smaller embedding shifts.

\paragraph{Empirical Analysis.} To empirically validate our encoding strategy, we select the iteration budget from two distinct ranges, \textit{i.e.} smaller budgets $[2, 20]$ and larger budgets $[100, 300]$. The step indices are embedded with sinusoidal encoding and visualized with t-SNE~\cite{t-SNE}. Results in Fig.~A\ref{fig:step-unit} show that when embedded with a fixed unit time step, embeddings from the smaller and larger budget ranges form clearly separated clusters with a distinct boundary, indicating a significant distribution shift between the two groups. In contrast, Fig.~A\ref{fig:step-total} shows that using a fixed total time yields a more mixed distribution, where embeddings from both ranges lie on a shared low-dimensional manifold, effectively alleviating the distribution shift. These results demonstrate the effectiveness of using a fixed total time rather than a fixed unit time step for step information encoding.

\begin{table*}
\caption{\textbf{Dataset Statistics.}}
\label{tab:statistics}
\centering
\begin{sc}
\begin{small}
\begin{tabular}{llrrr} 
\hline
Dataset         & Task Type                     & \# Nodes & \# Edges  & \# Classes  \\ 
\hline
arXiv           & inductive    & 169,343  & 1,166,243 & 40          \\
BookHistory     & inductive    & 41,551   & 358,574   & 12          \\
PubMed          & inductive     & 19,717   & 44,338    & 3           \\
arxiv-year      & inductive     & 169,343  & 1,166,243 & 5           \\
Cora            & inductive     & 2,708    & 10,556    & 7           \\
CiteSeer        & inductive     & 3,186    & 8,450     & 6           \\
DBLP            & inductive     & 14,376   & 431,326   & 4           \\
SportsFit       & inductive     & 173,055  & 1,773,500 & 13          \\
WikiCS          &inductive     & 11,701   & 431,726   & 10          \\
amazon-ratings  & inductive/transductive        & 24,492   & 93,050    & 5    \\
questions       & transductive & 48,921   & 245,778   & 2           \\
tolokers        & transductive   & 11,758   & 119,043   & 2           \\
minesweeper     & transductive   & 10,000   & 163,788   & 2           \\
AmazonComputer  & transductive   & 13,381   & 495,924   & 10          \\
AmazonPhoto     & transductive   & 7,487    & 153,540   & 8           \\
CoauthorPhysics & transductive   & 18,333   & 519,000   & 15          \\
CoauthorCS      & transductive   & 34,493   & 39,402    & 5           \\
\hline
\end{tabular}
\end{small}
\end{sc}
\end{table*}
\section{Experimental Details}
\subsection{Datasets}
For inductive transfer tasks, we adopt the textual attribute datasets~\cite{chen_TextspaceGraphFoundation_2024}, including (arXiv, bookhistory, amazon-ratings, and PubMed) for training and (arxiv-year, WikiCS, SportsFit, Cora, CiteSeer, and DBLP) for evaluation. arXiv, arxiv-year, PubMed, Cora, CiteSeer, and DBLP are citation graphs. Bookhistory, amazon-ratings, and SportsFit are economic-commercial graphs, while WikiCS is collected from Wikipedia.

For transductive tasks, we adopt both heterophilic datasets (questions, amazon-ratings, tolokers, and minesweeper)~\cite{platonov_CriticalLookEvaluation_2023} and homophilic datasets (CoauthorCS, CoauthorPhysics, AmazonPhoto, and AmazonComputers)~\cite{shchur_PitfallsGraphNeural_2019}. Specifically, CoauthorCS and CoauthorPhysics are derived from the Microsoft Academic Graph and model academic collaboration networks. AmazonComputers and AmazonPhoto are product co-purchase networks collected from Amazon. Questions is built from the Yandex Q\&A platform and represents user interaction graphs collected over a one-year period (September 2021–August 2022). Amazon-ratings is based on the Amazon co-purchasing network provided by the SNAP dataset collection~\cite{snapnets}. Tolokers represents crowdsourcing participation data from the Toloka platform. Minesweeper is a synthetic grid graph of size $100\times 100$, where nodes correspond to grid cells.

\subsection{Experimental Setup}\label{ssec:app-setup}
In both inductive and transductive experiments, we follow the standard splits as in the original papers for all the benchmarks. We adopt Adam \cite{kingma_AdamMethodStochastic_2015} as optimizer and set weight decay as $1\times 10^{-6}$. For inductive experiments, the training epoch is set as $10,000$, the learning rate is set to $1\times 10^{-4}$, the training iteration budget is set as $8$, and the test-time iteration budget is set as $300$. For transductive experiments, the training epoch is set as $1,000$, the learning rate is set to $1\times 10^{-3}$, the training iteration budget is selected from $\{2, 4, 8\}$, the test-time iteration budget is selected from $\{8, 20, 50, 100, 150, 200, 250, 300\}$ and the dropout probability is selected from $\{0, 0.1, 0.2, 0.3\}$. Models are selected based on the average performance on training datasets for the zero-shot setting, while based on validation sets for the transductive setting.

For the comparison under the zero-shot inductive transfer setting, we incorporate four types of baseline models, including supervised GNNs (GCN~\cite{kipf_SemiSupervisedClassificationGraph_2017}, GAT~\cite{velickovic_GraphAttentionNetworks_2018}, GCNII~\cite{chen_SimpleDeepGraph_2020}, GraphSAGE~\cite{hamilton_InductiveRepresentationLearning_2017}, and the recurrent graph model $\mathbf{N}^2$~\cite{sun_DynamicMessagePassing_2024}), self-supervised methods (DGI~\cite{velickovic_DeepGraphInfomax_2018}, GRACE~\cite{zhu_DeepGraphContrastive_2020}), a large language model Vicuna-7B~\cite{zheng_JudgingLLMJudgeMTBench_2023}, and graph foundation models (OFA~\cite{liu_OneAllTraining_2023}, LLaGA~\cite{chen_LLaGALargeLanguage_2024}, GFT~\cite{wang_GFTGraphFoundation_2024}, MDGFM~\cite{wang_MultiDomainGraphFoundation_2025}, RiemannGFM~\cite{sun_RiemannGFMLearningGraph_2025}, and UniGTE~\cite{wang_UniGTEUnifiedGraph_2025}).

For the comparison under the supervised transductive setting, we incorporate both GNNs (GCN~\cite{kipf_SemiSupervisedClassificationGraph_2017}, GAT~\cite{velickovic_GraphAttentionNetworks_2018}, 
APPNP~\cite{gasteiger_PredictThenPropagate_2018}, GCNII~\cite{chen_SimpleDeepGraph_2020}, H$_2$GCN~\cite{zhu_HomophilyGraphNeural_2020}, FAGCN~\cite{bo_LowfrequencyInformationGraph_2021}, GPRGNN~\cite{chien_AdaptiveUniversalGeneralized_2022}, GloGNN~\cite{li_FindingGlobalHomophily_2022}, and $\mathbf{N}^2$~\cite{sun_DynamicMessagePassing_2024}) and graph transformers (Graphormer~\cite{ying_TransformersReallyPerform_2021}, GraphGPS~\cite{rampasek_RecipeGeneralPowerful_2022}, NAGphormer~\cite{chen_NAGphormerTokenizedGraph_2023}, SAN~\cite{kreuzer_RethinkingGraphTransformers_2021}, and Exphormer~\cite{shirzad_ExphormerSparseTransformers_2023}).

Except for the reproducing experiments conducted on NVIDIA A100 for Vicuna-7B~\cite{zheng_JudgingLLMJudgeMTBench_2023}, OFA~\cite{liu_OneAllTraining_2023}, LLaGA~\cite{chen_LLaGALargeLanguage_2024}, RiemannGFM~\cite{sun_RiemannGFMLearningGraph_2025}, UniGTE~\cite{wang_UniGTEUnifiedGraph_2025}, and cost comparison, all the experiments are conducted on a single NVIDIA GeForce RTX 4090.

For the cost comparison, models are pre-trained on arXiv, bookhistory, amazon-ratings, and PubMed, evaluated on WikiCS, SportsFit, and Cora.

\subsection{Additional Results}

\begin{table*}
\caption{\textbf{Full Results for Self-supervised Methods (measured by accuracy: \%).}}
\label{tab:ssl}
\centering
\begin{sc}
\begin{small}
\begin{tabular}{llcccccc} 
\hline
            & Backbones & arxiv-year       & WikiCS     & SportsFit  & Cora       & CiteSeer         & DBLP              \\ 
\hline
\multirow{3}{*}{DGI}   & GAT       & OOM       & 16.14    & 9.13  & 17.88 & OOM       & 5.40    \\
                       & GCN       & OOM       & 14.73    & 15.28 & 35.88 & OOM       & 8.26    \\
                       & GraphSAGE & OOM       & 26.81    & 15.47 & 36.78 & OOM       & 2.82    \\ 
\hline
\multirow{3}{*}{GRACE} & GAT       & OOM       & 16.77    & 19.68 & 23.29 & OOM       & 3.84    \\
                       & GCN       & OOM       & 18.18    & 10.15 & 26.46 & OOM       & 5.06    \\
                       & GraphSAGE & OOM       & 25.39    & 4.59  & 19.86 & OOM       & 7.26    \\
\hline
\textbf{AdaR (Ours)} & & \textbf{36.15}       & \textbf{37.81} & \textbf{31.56} & \textbf{41.28} & \textbf{59.39}       & \textbf{59.83}        \\
\hline
\end{tabular}
\end{small}
\end{sc}
\end{table*}
\subsubsection{Full results for Zero-shot Inductive Transfer}\label{sssec:full-zero-shot}
For baselines, we incorporate four types of baseline models, including supervised GNNs, self-supervised methods, a large language model Vicuna-7B, and zero-shot graph foundation models. 
The self-supervised methods are implemented with GCN, GAT, and GraphSAGE. The best results among the three backbones are reported for each method in the main context. Here, we present the full results in Tab.~\ref{tab:ssl}, where AdaR consistently outperforms self-supervised methods with different backbones.

\begin{figure*}[htb]
\centering
\includegraphics[width=0.45\linewidth]{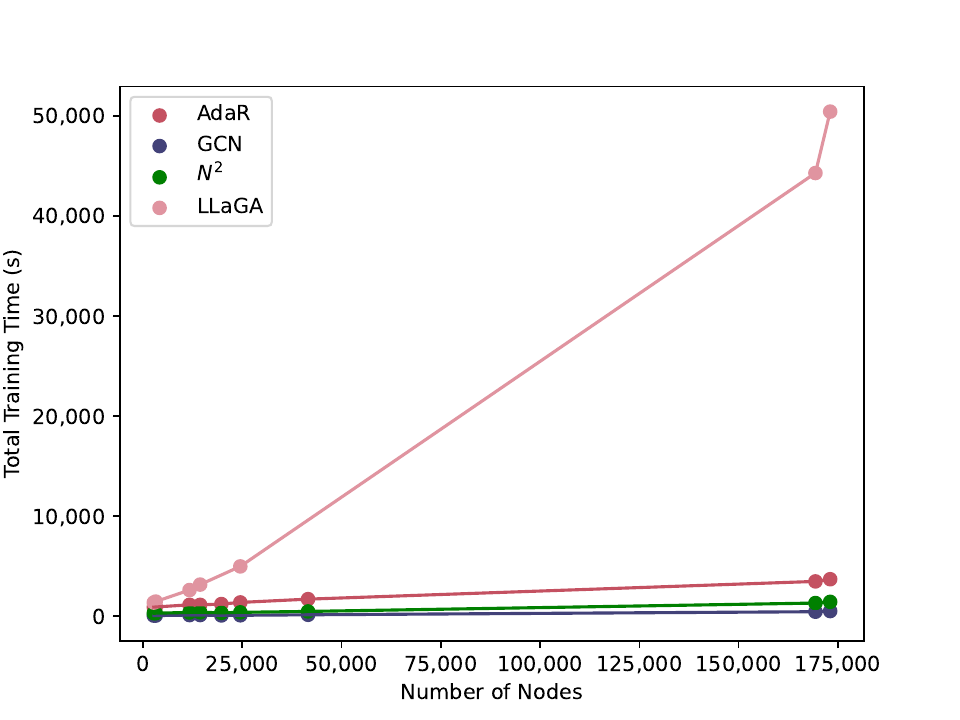}
% \vspace{-0.1in}
\caption{\textbf{Training Time Regarding Different Numbers of Nodes.}}\label{fig:app-time}
\end{figure*}
\subsubsection{Complexity Analysis}\label{sssec:app-time}
Similar to classical GNNs such as GCN and GAT, AdaR requires a computing complexity of $O(n+m)$, where $n$ and $m$ denote the number of nodes and edges, respectively. In contrast, LLM-based models such as LLaGA often require a computing complexity of $\frac{O(k^2)\cdot n}{B}$, where $B$ and $k$ denote the batch size and the number of nodes in the mini-batch, respectively. Due to the quadratic time complexity of LLMs, these models can only operate on mini-batches and compute multiple batches in a sequential manner. Empirically, we conduct complexity analysis on real-world datasets with varying scales. Baselines include classic GNN model GCN, recurrent model $\mathbf{N}^2$, adaptive recurrent model AdaR, and LLM-based model LLaGA. Results in Fig.~\ref{fig:app-time} show the linear time complexity of AdaR, comparable to other GNNs. Conversely, LLaGA exhibits significantly longer training time due to the requirement of batch processing.

\section{Limitation}\label{sec:app-limitation}
This paper proposes an adaptive recurrent graph model, AdaR, enabling flexible test-time computation on graphs. In the current implementation, AdaR still operates under a pre-defined maximum iteration budget $S$. In practice, we set $S=300$ for zero-shot evaluation, while allowing early termination based on the average inner-product similarity between node representations and targets. Despite this practical flexibility, determining an input-adaptive criterion that automatically infers the optimal iteration budget remains an open problem, which we leave for future work.

%%%%%%%%%%%%%%%%%%%%%%%%%%%%%%%%%%%%%%%%%%%%%%%%%%%%%%%%%%%%%%%%%%%%%%%%%%%%%%%
%%%%%%%%%%%%%%%%%%%%%%%%%%%%%%%%%%%%%%%%%%%%%%%%%%%%%%%%%%%%%%%%%%%%%%%%%%%%%%%

\end{document}